\title{Variational Optimization for the Submodular Maximum Coverage Problem}
\author{Jian Du}
\affiliation{
  \institution{Ant Financial}
  \city{Sunnyvale}
  \state{CA}
}
\email{jian.d@antfin.com}
\author{Zhigang Hua}
\affiliation{
  \institution{Ant Financial}
  \city{Sunnyvale}
  \state{CA}
}
\email{z.hua@antfin.com}
\author{Shuang Yang}
\affiliation{
  \institution{Ant Financial}
  \city{Sunnyvale}
  \state{CA}
}
\email{shuang.yang@antfin.com}
\begin{abstract}
We examine the \emph{submodular maximum coverage problem} (SMCP), which is related to a wide range of applications. We provide the first variational approximation for this problem based on the Nemhauser divergence, and show that it can be solved efficiently using variational optimization. The algorithm alternates between two steps: (1) an E step that estimates a variational parameter to maximize a parameterized \emph{modular} lower bound; and (2) an M step that updates the solution by solving the local approximate problem. We provide theoretical analysis on the performance of the proposed approach and its curvature-dependent approximate factor, and empirically evaluate it on a number of public data sets and several application tasks.
\end{abstract}
\begin{document}
\maketitle

\section{Introduction}
Submodular optimization lies at the core of many data mining and machine learning problems, ranging from 
summarizing massive data sets \cite{badanidiyuru2014streaming, karbasi2018data}, cutting and segmenting images \cite{boykov2001interactive,kohli2008p3, jegelka2011submodularity},
 monitoring network status \cite{leskovec2007cost, krause2008near}, diversifying  recommendation systems \cite{mirzasoleiman2016fast,ashkan2015optimal}, searching neural network architectures \cite{xiong2019resource}, interpreting machine learning models \cite{elenberg2017streaming,lakkaraju2016interpretable, kim2016examples,ribeiro2016should}, to asset management and risk allocation in finance \cite{Acerbi12, ohsaka2017portfolio}. Recent works have studied the optimization of submodular functions in various forms, for example,  weighted coverage functions \cite{feige1998threshold}, rank functions of  matroids \cite{krause2009simultaneous}, facility location functions \cite{krause2008efficient}, entropies \cite{sharma2015greedy}, as well as mutual information \cite{krause2012near}. In a typical setting, the optimization is subject to the classical {\it cardinality} constraint, where the number of elements selected is required to be under a preset constant limit. It's been shown that even with this simple constraint, many submodular optimization problems are NP-hard, although under certain conditions
the greedy algorithm can provide a good approximate solution  \cite{nemhauser1978analysis, vondrak2008optimal, wolsey1982analysis}.

The forms of constraints in real applications are often very complex and may be given either analytically or in terms of value oracle models. We, therefore, investigate a more generalized formulation, i.e., the problems of maximizing a submodular function $g(X)$ subject to a general submodular upper bound constraint $f(X)\leq b$.
This problem is referred to as the \emph{submodular maximum coverage problem} (SMCP), or \emph{submodular maximization with submodular knapsack constraint} \cite{iyer2013submodular}. The pioneer work \cite{iyer2013submodular} first examined this problem and introduced an algorithms with bi-criterion approximation guarantees. 
The importance of SMCP has been widely recognized as it can be regarded as a meta-problem for a breadth of tasks including training the most accurate classifier subject to process unfairness 
constraints \cite{grgic2018beyond}, automatically design convolutional neural networks to maximize accuracy with a given forward
time  constraint
\cite{hu2019automatically}, and
 selecting leaders  in a social network for shifting opinions \cite{yi2019shifting}, to name a few. 
 
While \cite{iyer2013submodular} shows the greedy method with a modular approximation has good performance, we take a step further to build a mathematical connection  between the variational modular approximation to a submodular function based on Namhauser divergence and classical variational approximation based on Kullback–Leibler divergence.
We take advantage of this framework 
to iteratively solve SMCP, leading to a novel variational approach. Analogous to the counterpart of variational optimization based on Kullback-Leibler divergence, the proposed method consists of two alternating steps, namely estimation (E step) and maximization (M step) to monotonically improve the performance in an iterative fashion. 
We provide theoretical analysis on the performance of the proposed variational approach and prove that the E step provides the optimal estimator for the subsequent M step. More importantly, we show that the approximate factor of the EM algorithm is decided by the curvature of the objective function and the marginal gain of the constraint function.
We evaluated the proposed framework on a number of public data sets and demonstrated it in several application tasks.

\section{Problem Definition}
\subsection{Formulation}
\emph{Submodularity} is an important property that naturally exists in many real-world scenarios, for example, \emph{diminishing returns} in economics \cite{smith1937wealth}, which refers to the phenomenon that the marginal benefit of any given element tend to decrease as more elements are added. Formally, let $[n]=\{1,2,\ldots, n\}$ be a finite ground set and the set of all subsets of $[n]$ be $2^{[n]}$. The real-valued discrete set function  $f: 2^{[n]}\to \mathbb R$ is  \textit{submodular} on $[n]$  if 
\begin{equation}\label{submodularity1}
f(X) + f(Y)\geq f(X\cup Y) + f(X\cap Y)
\end{equation}
holds for all $X,Y\subseteq[n]$ \cite{fujishige2005submodular}. 
We denote a singleton set with element $j$ as $\{j\}$ and the \textit{marginal} gain as  $f(j|X)\triangleq f(j\cup X)-f(X)$.
The marginal gain is also known as the discrete derivative of $f$ at $X$ with respect to $j$, and  we use $\Delta_{f}$ to denote the { maximum marginal gain}
at $X=\emptyset$:
\begin{equation}
\label{delta_f}
\Delta_{f} = \max_{j\in[n]}
f(j).
\end{equation}
In terms of the marginal gain,
the submodularity defined in (\ref{submodularity1}) is equivalent to \begin{equation}\label{submodularity2}
f(j|X)\geq f(j| Y),
\ \forall X\subseteq Y\subseteq[n], \ j\notin Y.
\end{equation}
Intuitively, the monotonicity means $f$ won't decrease as $X$ is expanded.
A necessary and sufficient monotone condition for $f$ is that
 all its discrete derivatives are nonnegative, i.e., $f(j|X)\geq 0$, for all $j\notin X$ and $X\subseteq[
n]$.

Our primary interest is the \emph{aa} (\texttt{SMCP}):
\begin{equation}\label{P_max}
\begin{aligned}
\max_{X} 
~ g(X), \quad
\mathrm{s.t.}~~ f(X)\leq b,
\end{aligned}
\end{equation}
where $f(X)$ and $g(X)$ are monotone, and are assumed to be normalized such that $f(\emptyset)=0$ and $g(\emptyset)=0$. Our formulation is general enough with minimal assumptions, the techniques developed in this paper, including the analysis, are applicable to general forms of  $g(X)$ and $f(X)$, including those with analytical forms or given in terms of a  value oracle\footnote{For a given set $X$, one  can query an oracle to find its value $f(X)$ and $g(X)$, and both $f(X)$ and $g(X)$ could be computed by a black box.}. 

Fig~\ref{fig:bipartite_graph} illustrates  a  concrete example of SMCP,
where we are given a bipartite graph  consisting of two kinds of nodes, i.e., square nodes and circle nodes; each circle node is associated with a non-negative value, and each square node represents a singleton; the goal is to select a subset, $X$, out of the square nodes, such that the circle nodes being covered have as much total value (denoted by $g(X)$) as possible yet the number of circle node selected is within a set limit, i.e., $f(X)<b$. 

\begin{figure}[t]
\label{fig:bipartite_graph}
\centering
\includegraphics[width=0.36\textwidth]{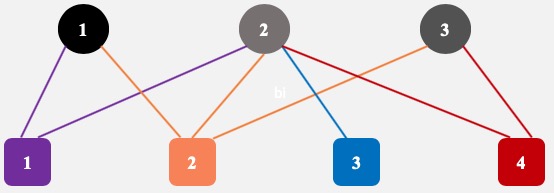}
\caption{A bipartite graph to illustrate an SMCP problem. Each square node  covers at least one circle node.
Different square nodes is allowed to cover the same circle node, and each circle node associates a nonnegative value. The goal is to find a subset of the square nodes covering maximum total value, yet with the  number of circle nodes smaller than $b$.}
\label{fig:bipartite_graph}
\end{figure}

\subsection{Related Problems}
The SMCP problem was first studied in \cite{iyer2013submodular}, where it was also referred to as \emph{submodular cost with submodular knapsack constraint} (SCKC). The authors further established 
the equivalence between SMCP (\ref{P_max}) and minimizing $f(X)$ subject to $g(X)\geq c$ (called \emph{submodular cost with submodular cover constraint} or SCSC). A greedy algorithm and an 
ellipsoidal approximation method were employed to solve SMCP in \cite{iyer2013submodular}.

SMCP is regarded as a meta-problem to many application tasks, of which we introduce a few examples. In \cite{grgic2018beyond}, it was shown that training a classifier with fairness 
constraints involves solving a variant of SMCP, where $X$ is the feature subset, and both the objective (i.e., loss function) and constraints are submodular. \cite{hu2019automatically} studied automatically designing convolutional neural networks (CNNs) to maximize accuracy within a given forward
time  constraint, where $X$ represents the configuration of a CNN (e.g, kernel size at each layer) and $f(X)$ is the forward time function.
It's shown that the validation
accuracy on a held out set of samples is submdoular \cite{xiong2019resource}.
\cite{kempe2003maximizing} investigated influence maximization in social networks
and show the
influence function is submodular, although the formulation is unconstrained. \cite{yi2019shifting} studied French-Degroot opinion dynamics in a
social network with two polarizing parties to shift opinions in a social network through leader selection. In their formulation, $g(X)$ is the influence function and $f(X)$ is the average opinion of all nodes, both of which are submodular. In finance, the  \emph{conditional value at risk} (CVaR) is well known and widely used for risk control and portfolio management, for example, \cite{ohsaka2017portfolio} examined the maximization of CVaR to select portfolios based on a formulation similar to SMCP, and employed a greedy method to solve it.

Cardinality-constrained submodular maximization is a special case of SMCP since the candinality $|X|$ is a modular function. A number of important tasks can be approached by this simpler variant of SMCP, for example, data set summarization \cite{badanidiyuru2014streaming, karbasi2018data}, network status monitoring  \cite{leskovec2007cost, krause2008near}, and interpretable machine learning \cite{elenberg2017streaming, kim2016examples,ribeiro2016should}.

\section{Variational Bounds}
A submodular function resembles both convex functions and concave functions
 \cite{iyer2013submodular}, in the sense that it can be bounded both from above and below.
In this section, we propose \emph{variational SMCP} (V-SMCP), a variational approximate for SMCP based on Nemhauser divergence.

\subsection{\bf Upper Bound for $f(X)$}

In the seminal work \cite{nemhauser1978analysis}, it is demonstrated  that the submodularity of $f(X)$ in (\ref{submodularity1}) is equivalent to the following inequality  
\begin{equation}\label{Nemhauster_divergence1}
\underbrace{f(X)-\sum_{j \in X \backslash Y} f(j | X\backslash j)+\sum_{j \in Y \backslash X} f(j | \Theta)}_{\triangleq \widehat f_{X}(Y;\Theta)}
 \geq f(Y), \ \forall X, Y\subseteq [n],
\end{equation}
where $\Theta=  X \cap Y$.
Following from the above inequality, 
the Nemhauser divergence \cite{iyer2012submodular}  between two set functions $\widehat f_{X}(Y; \Theta)$  and  $f(Y)$ is defined as   
\begin{equation}
\mathcal D(\widehat f_{X}(Y; \Theta)|| f(Y)) = \widehat f_{X}(Y; \Theta) -  f(Y),
\end{equation}
which
 satisfies  $\mathcal D(\widehat f_{X}(Y; \Theta)|| f(Y)) \geq 0$.  The equality holds when $X=Y$, which implies $\Theta= Y$.
The Nemhauser divergence measures the distance between two set functions and is not  symmetric, which is similar to the Kullback-Leibler divergence that measures the distance between two probability distributions.

 Note that \cite{nemhauser1978analysis}  provids another inequality, which is also equivalent to the submodularity of $f(X)$, given by
\begin{equation}
\label{Nemhauster_divergence1}
\underbrace{f(X)+\sum_{j \in Y \backslash X} f(j | X)-\sum_{j \in X \backslash Y} f(j | \Psi\backslash j)}_{\triangleq  \widehat f_{X}(Y;\Psi)}
 \geq f(Y), \ \forall X, Y\subseteq [n],
\end{equation}
with $\Psi = X\cup Y$.
We  can therefore define the divergence with  
$\widehat f_{X}(Y; \Psi)$,
i.e., $\mathcal D(\widehat f_{X}(Y; \Psi)|| f(Y)) = \widehat f_{X}(Y; \Psi) -  f(Y)$ for the variational optimization.
Yet, as there is no guarantee that which one between these two functions provides a better  approximation, we focus on $\mathcal D(\widehat f_{X}(Y; \Theta)|| f(Y))$ in this paper, and all the algorithms and analyses provided can be adapted to the algorithm based on $\mathcal D(\widehat f_{X}(Y; \Psi)|| f(Y))$.

\subsection{Lower Bound for $g(X)$}

We define a permutation on the elements of $[n]$, i.e., $\pi:[n]\to [n]$ that orders the elements in $[n]$ as a sequence $\left(\pi_1, \pi_2, \ldots, \pi_n\right)$,
which denotes that if $\pi_i=j$,  $j$ is the $i$-th
element in this sequence.
Particularly, given a subset $X_t\subseteq 2^{[n]}$, we choose a permutation $\pi$ that places the elements in $X_t$ first and then includes the remaining elements in $[n]\backslash X_t$, where the subscript $t$ denotes the iteration number used in the EM algorithm introduced in the next section.
We further define the corresponding sequence of subsets of $[n]$ as $S^{\pi}_i$ with $i=0,\ldots, n$, which is given by
\begin{equation} \label{setchain}
S^{\pi}_0=\emptyset, \
S^{\pi}_1=\{\pi_1\}, \ \ldots, \
S^{\pi}_n=\{\pi_1,\ldots, \pi_n\},
\end{equation}
which results in
$
\emptyset=S^{\pi}_{0} \subset S^{\pi}_{1} \subset S^{\pi}_{2} \ldots \subset S^{\pi}_{n}=[n] 
$. 
Then a lower bound of $g(X)$ is given by \cite{iyer2013submodular}
\begin{equation}\label{loewr_bound}
\widehat g^{\pi}_{X_t}(X)=  \sum_{j\in X}\widehat g^{\pi}_{X_t}(j), \ \forall X\subset [n],
\end{equation}
where $\widehat g^{\pi}_{X_t}(j)$ with $j=\pi_i$ is defined by \cite{iyer2013submodular}
\begin{equation}
\label{diff-sub}
\widehat g^{\pi}_{X_t}(j) =
\widehat g^{\pi}_{X_t}(S_i^{\pi}-S^{\pi}_{i-1}) = g(S_i^{\pi})- g(S^{\pi}_{i-1}).
\end{equation}
Since $X_t$ and $\pi$ has a mapping relationship, in the following of the paper, we omit the superscript $\pi$ when no confusing is caused.
The lower bound property, i.e., 
$\widehat g_{X_t}(X)\leq g(X)$ can be easily proved \cite{iyer2013submodular} according to the submodularity.
Further more, substituting (\ref{diff-sub}) into (\ref{loewr_bound}) and considering the permutation given by $\pi$, it guarantees the tightness at $X_t$ that
\begin{equation}\label{tight_g}
    \widehat{g}_{X_t}(X_t) = g(X_t).
\end{equation}
 
\subsection{Variational Approximation for SMCP}
The SMCP in (\ref{P_max}) can be approximated, at any given $X_t$, by the following problem, which we call \texttt{V-SMCP}:
\begin{equation}\label{P_max_modular}
\begin{aligned}
 \max_{X}~~~~ 
& \widehat g_{X_t}(X ) \\
\quad\mathrm{s.t.}~~~~
& \widehat f_{X_t}(X;\widehat\Theta_t)\leq b,\\
& \widehat\Theta_t = \argmin_{\Theta} \mathcal D(\widehat f_{X_t}(X;\Theta)
||f(X)),\\
& \Theta = X\cap X_t,
\end{aligned}
\end{equation}
where $\widehat g_{X_t}(X )$ and $\widehat f_{X_t}(X;\widehat\Theta_t)$ are lower bound and upper bound for $g(X)$ and $f(X)$, respectively.
V-SMCP is an effective approximation of SMCP as both bounds are tight at $X_t$, i.e.,
$\widehat g_{X_t}(X_t ) = g(X_t)$ and 
$\widehat f_{X_t}(X_t;X_t)=f(X_t)$.


\section{Variational Optimization}
\label{VariationlEM}

In this section, we introduce an iterative method to solve an SMCP based on a sequence of V-SMCPs. It alternates between (1) an \emph{estimation} (E) step that minimizes the Namhauser divergence by estimating the parametric approximation; and (2) a subsequent \emph{maximization} (M) step that updates the solution.

Since $\widehat f_{X_t}(X;\Theta)$ is an upper bound of $ f(X)$, maximizing $ \widehat f_{X_t}(X; \Theta)$ w.r.t $\Theta$ will equivalently minimizing $\mathcal D_{\Theta}(\widehat f_{X_t}(X;\Theta)|| f(X))$.
We therefore 
treat  $\Theta$ as a variational parameter and estimate it in the E step to reduce $\mathcal D_{\Theta}(\widehat f_{X_t}(X;\Theta)|| f(X))$ as much as possible. Then with $\Theta=\widehat\Theta_t$, we update the solution by solving a V-SMCP in the M step. We name this method \emph{estimation-maximization} (EM) algorithm.

\subsection{\bf{E step: Estimate $\widehat \Theta_t$}}

According to  the submodularity definition in (\ref{submodularity2}), we have $f(j|\Theta_1)\geq f(j|\Theta_2)$ if $\Theta_1\subseteq \Theta_2$.  Following from (\ref{Nemhauster_divergence1}),
for all $ X\subseteq [n]$, we further obtain
\begin{equation}\label{monotone}
\widehat f_{X_t}(X;\Theta_1)\geq  \widehat f_{X_t}(X;\Theta_2), \
\forall  \Theta_1\subseteq \Theta_2.
\end{equation}
This inequality indicates that 
 we can decrease the divergence of $\mathcal D(\widehat f_{X_t}(X;\Theta)|| f(X))$ by enlarging  $\Theta$.
 Thus, the largest $\Theta$ is $X_t$, according to the Nemhauser divergence defined in (\ref{Nemhauster_divergence1}).
To avoid notational clumsiness,  we use $E\backslash j$ to denote a set that excludes $j$, i.e.,
\begin{equation} \label{abb_E}
E{\backslash j}=\left\{
\begin{aligned}
&X_t \backslash j,   & \textrm{if} \ j\in X_t, \\
&X_t, & \textrm{if}\ j\notin X_t.
\end{aligned}
\right.
\end{equation}
By substituting (\ref{abb_E}) to (\ref{Nemhauster_divergence1}), we  define a 
permutation operation $\epsilon:[n]\to [n]$ that orders the elements in $[n]$ as a sequence $\left(\epsilon_1, \epsilon_2, \ldots, \epsilon_n\right)$ such that \begin{equation}\label{sorting_1}
\frac{\widehat g_{X_t}(\epsilon_1)}{f(\epsilon_1|  E\backslash{\epsilon_1})}
\geq 
\frac{\widehat g_{X_t}(\epsilon_2)}{f(\epsilon_2|  E\backslash{\epsilon_2})}
\geq
\ldots
\geq
\frac{\widehat g_{X_t}(\epsilon_n)}{ f(\epsilon_n|  E\backslash{\epsilon_n})}.
\end{equation}
There must exist a $\widehat k$ such that $$\widehat k=\argmax_{k}\sum_{k^{\prime}=1}^{k} 
f(\epsilon_{k^{\prime}}| E_{\epsilon_k})\leq b.$$ 
We then obtain an estimation of $\Theta$ given by 
\begin{equation} \label{estimation}
\widehat \Theta_t =  X_t\cap \widehat X_t,
\end{equation}
with
\begin{equation}\label{inter-solution}
\widehat X_t = \{\epsilon_1,\epsilon_2,\ldots, \epsilon_{\widehat k} \}.
\end{equation}
While there is no guarantee that $\widehat f_{X_t}(\widehat X_t;\widehat\Theta_t)\leq b$ is satisfied, the estimator
$\widehat \Theta_t$  as well as $\widehat X_t$  would lead to a larger feasible space for maximizing $\widehat g_{X_t}(X)$
in the subsequent M step, which is analytically proved in Section \ref{analysis}. 
 
\subsection{\bf { M Step: Compute the Maximizer $X_{t+1}$}}

For notational brief, in the M step,
we  represent a set without $j$ given $X_t$ as
\begin{equation}
\label{abb_M}
M\backslash j =\left\{
\begin{aligned}
&X_t \backslash j,   \ \textrm{if} \quad  j\in X_t, \\
& \widehat \Theta_t,  \quad \textrm{if}\quad j\notin X_t.
\end{aligned}
\right.
\end{equation}
Substituting (\ref{abb_M}) to (\ref{Nemhauster_divergence1}), we further   define a new
permutation $\mu:[n]\to [n]$ that orders the elements in $[n]$ as a new sequence $\left(\mu_1, \mu_2, \ldots, \mu_n\right)$ such that 
\begin{equation}
\label{sorting2}
\frac{\widehat g_{X_t}(\mu_1)}{ f(\mu_1|M\backslash{\mu_1})}
\geq 
\frac{\widehat g_{X_t}(\mu_2)}{ f(\mu_2|M\backslash{\mu_2})}
\ldots
\geq
\frac{g_{X_t}(\mu_n)}{ f(\mu_n|M\backslash{\mu_n})}.
\end{equation}
By letting $\widehat m$ be the largest index that satisfy the following inequality:
\begin{equation}
\label{m_max}
\widehat m=\argmax_{m}\sum_{m^{\prime}=1}^{m} 
f(\mu_{m^{\prime}}|M_{\mu_{m^{\prime}}})\leq b,
\end{equation}
we finally obtain the optimizer at the $t$-th iteration:
\begin{equation}\label{solution}
X_{t+1} = \{\mu_1,\mu_2,\ldots, \mu_{\widehat m} \}.
\end{equation}
From (\ref{loewr_bound}), the corresponding  objective value is
\begin{equation}
\label{obj_t}
    \widehat g(X_{t+1})=
    \sum_{m^{\prime}=1}^{\widehat m}g_{X_t}(\mu_{m^{\prime}}).
\end{equation}

The algorithm terminates once 
$
\widehat g(X_{t+1})\leq \widehat g(X_{t})$, which is equivalent to  
$g(X_{t+1})\leq g(X_{t})$ according to (\ref{tight_g}).
The proposed EM algorithm is summarized in Algorithm 1. Note that in both E step and M step, the permutation $\epsilon$ and $\mu$ can be implemented in   $\mathcal{O}(n\log{}n)$ time
through any efficient sorting procedure.

Fig.~ \ref{fig:visual_EM}
shows how the EM algorithm approximates the solution of P1  in the space of $2^{[n]}\times \mathbb R $. The black curve represents the objective function under constraint in SMCP. 
At the $t$-th iteration, we construct $\widehat g_{X_t}(X)$ with tightness guarantee at $X_t$ according to (\ref{tight_g}).
In the E step, we compute $\widehat \Theta_t$ to enlarge the feasible space, in the subsequent M step, we compute $X_{t+1}$, which is the approximate solution for P2. 
The corresponding function is shown by the red curve.
Then at the $(t+1)$-th, we compute the new lower bound with estimation $\widehat\Theta_{t+1}$ depicted by the blue color. 

A simplified version of EM algorithm can be obtained by setting $\widehat \Theta_t = \emptyset$ in the EM algorithm. This simplified EM (SEM) method saves the computation cost for
the permutation $\epsilon$ in the E step. We summarize the SEM in Algorithm 2.
However, it is evident that the E step of EM algorithm leads to larger or equal (when $\widehat \Theta = \emptyset$ in (\ref{estimation})) feasible space than the SEM algorithm. Therefore, it is guaranteed that the EM algorithm has a no smaller objective value than SEM, which is also verified by experiments in Section \ref{section:experiments}.

\begin{algorithm}[t]
\label{alg1}
\caption{EM algorithm.}
\begin{algorithmic}[1]
\REQUIRE Initialization: $X_0$
\COMMENT{EM Algorithm.}
\WHILE{$\widehat g_{X_t}(X_t)\leq g_{X_t}(X_{t+1})$}
\STATE{{\bf E Step:}}
  \STATE{Compute  $\widehat X_t $  via (\ref{inter-solution}).}
  \STATE{Update $\widehat \Theta_t$ via (\ref{estimation}).}
\STATE{{\bf M Step:}}
  \STATE{Update $X_{t+1}$  via (\ref{solution})}.
\ENDWHILE
\end{algorithmic}
\end{algorithm}

\begin{algorithm}[t]
\label{alg-SEM}
\caption{SEM algorithm}
\begin{algorithmic}[1]
\REQUIRE Initialization: $X_0$
\COMMENT{SEM Algorithm.}
\WHILE{$\widehat g_{X_t}(X_t)\leq g_{X_t}(X_{t+1})$}
\STATE{{\bf E Step:}}
  \STATE{Set $\widehat \Theta_t=\emptyset$.}
\STATE{{\bf M Step:}}
  \STATE{Update $X_{t+1}$  via (\ref{solution})}.
\ENDWHILE
\end{algorithmic}
\end{algorithm}

\section{Theoretical Analysis}
\label{analysis}
In this section, we provide analysis of the proposed EM algorithm.
By replacing $X$ and $Y$ in (\ref{Nemhauster_divergence1}) with $X_t$ and $X_{t+1}$, we obtain 
\begin{equation}\label{test}
b\geq 
\underbrace{f(X_t)-\sum_{j \in X_t \backslash X_{t+1}} f(j | X_t\backslash j)+\sum_{j \in X_{t+1} \backslash X_t} f(j | \Theta)}_{= \widehat f_{X_t}(X_{t+1};\Theta)}
 \geq f(X_{t+1}).
\end{equation}

\begin{figure}[t]
\centering
\includegraphics[width=0.45\textwidth]{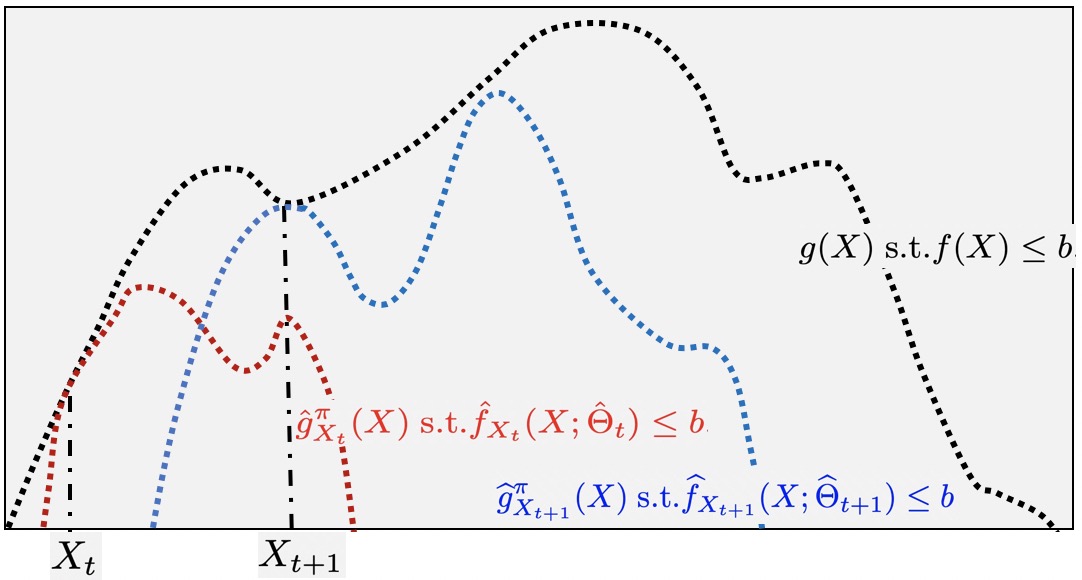}
\caption{The alternating process of the proposed EM algorithm, which involves computing a lower bound with maximum feasible space and then maximizing this lower bound to update the optimizer.}
\label{fig:visual_EM}
\end{figure}
The quantities $\Theta = X_t\cap X$ implies that $\emptyset\subseteq\Theta\subseteq X_t$.
According to  (\ref{monotone}), we have 
$\widehat f_{X_t}(X;\emptyset) \geq \widehat f_{X_t}(X;\Theta)\geq \widehat f_{X_t}(X;X_t)$.
Thus, $\widehat f_{X_t}(X;X_t)$ is the tightest bound we can achieve.
In spite of this, we cannot simply set $\widehat \Theta_t = X_t$ since it is unsecured  that   $X_{t+1}$, which is obtained in the M step, satisfies $X_{t}\cap X_{t+1}= X_t$.
Then there is no warranty that $\widehat f_{X_t}(X_{t+1};X_t)\geq f(X_{t+1})$. Consequently, $f(X_{t+1})<b$ is not guaranteed, and $X_{t+1}$ may not lies in the feasible space, which violates the constraint in (\ref{P_max_modular}).
In the following theorem, we analytically show the {\it optimality} of  $\widehat \Theta_t$ (equation (\ref{estimation}) in the proposed  E step).
Here, an optimal $\widehat \Theta_t$ implies that it provides the feasible space which is a superset of all the feasible space provided by any other $\Theta_t$'s.

\begin{theorem}
[{\bf Optimality}]
\label{EM_opt}
In the E step of the EM algorithm, $\widehat \Theta_t$, i.e., equation (\ref{estimation}), provides the optimal  $\Theta_t$ for the optimization problem in the M step at each iteration.
\end{theorem}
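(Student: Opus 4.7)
The plan is to reduce the claimed set-superset statement on feasible spaces to a set-inclusion statement on $\Theta$ itself via the monotonicity (\ref{monotone}), and then to certify the maximality of $\widehat\Theta_t$ using the greedy construction in the E step.

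First, I would translate (\ref{monotone}) into feasibility language. Defining $\mathcal{F}(\Theta)\triangleq\{X\subseteq[n]:\widehat f_{X_t}(X;\Theta)\leq b\}$, the monotonicity of $\widehat f_{X_t}(\cdot;\Theta)$ in $\Theta$ immediately yields $\mathcal{F}(\Theta_1)\subseteq\mathcal{F}(\Theta_2)$ whenever $\Theta_1\subseteq\Theta_2$. So it suffices to show $\widehat\Theta_t\supseteq\Theta$ for every competing admissible $\Theta$. Admissibility here is the consistency requirement from (\ref{P_max_modular}): $\Theta=X\cap X_t$ must hold for the candidate $X$ that the M step will return, and by the Nemhauser inequality (\ref{Nemhauster_divergence1}) this amounts to $\Theta\subseteq X_t\cap X$, so that $\widehat f_{X_t}(X;\Theta)\geq f(X)$ remains a valid upper bound on $f(X)$ (and thus the M-step output actually satisfies the original constraint $f(X_{t+1})\leq b$).

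Next, I would certify admissibility of $\widehat\Theta_t=X_t\cap\widehat X_t$ by exhibiting $\widehat X_t$ itself as the explicit witness. The greedy rule (\ref{sorting_1})--(\ref{inter-solution}) sorts elements by the tightest modular cost $f(\epsilon\mid E\backslash\epsilon)$, which corresponds to setting $\Theta=X_t$ in the Nemhauser bound, and enforces $\sum_{k'=1}^{\widehat k}f(\epsilon_{k'}\mid E\backslash\epsilon_{k'})\leq b$. A direct unfolding of (\ref{Nemhauster_divergence1}) with $Y=\widehat X_t$ and $\Theta=\widehat\Theta_t$ then gives $\widehat f_{X_t}(\widehat X_t;\widehat\Theta_t)\leq b$, so $\widehat X_t\in\mathcal{F}(\widehat\Theta_t)$, and the consistency $\widehat\Theta_t=X_t\cap\widehat X_t$ holds by construction.

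The main obstacle is establishing maximality: that no admissible $\Theta'$ strictly contains $\widehat\Theta_t$. I would approach this by an exchange-type argument. Suppose some $j\in X_t\setminus\widehat X_t$ could be appended to produce an admissible $\Theta'\supsetneq\widehat\Theta_t$; then its witness $X'$ must satisfy $j\in X'\cap X_t$, so $X'$ includes $j$ in addition to the modular contributions retained from $\widehat X_t$. The greedy ordering (\ref{sorting_1}) together with the saturation rule (\ref{inter-solution}) forces the ratio $\widehat g_{X_t}(j)/f(j\mid E\backslash j)$ to lie no higher than those of the elements already chosen into $\widehat X_t$, and recomputing the cumulative cost with $j$ swapped in drives the total past $b$, contradicting $X'\in\mathcal{F}(\Theta')$. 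Coupling this extremality of the greedy E step with the monotonicity reduction from the first paragraph then yields $\Theta'\subseteq\widehat\Theta_t$, hence $\mathcal{F}(\Theta')\subseteq\mathcal{F}(\widehat\Theta_t)$, completing the argument.
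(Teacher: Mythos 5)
Your first paragraph matches the paper's opening move exactly: both reduce the comparison of feasible spaces to set inclusion of the variational parameters via the monotonicity (\ref{monotone}), so that it suffices to show $\widehat\Theta_t$ contains every admissible $\Theta_t$. From there, however, your argument has two genuine gaps. First, your admissibility certificate is false: you claim that unfolding (\ref{Nemhauster_divergence1}) with $Y=\widehat X_t$ and $\Theta=\widehat\Theta_t$ gives $\widehat f_{X_t}(\widehat X_t;\widehat\Theta_t)\leq b$, but the paper explicitly states the opposite (``there is no guarantee that $\widehat f_{X_t}(\widehat X_t;\widehat\Theta_t)\leq b$ is satisfied''). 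The E-step budget check sums the marginals $f(\epsilon_{k'}\mid E\backslash\epsilon_{k'})$, where $E\backslash j=X_t$ for $j\notin X_t$; by submodularity these are \emph{smaller} than the marginals $f(j\mid\widehat\Theta_t)$ appearing in $\widehat f_{X_t}(\widehat X_t;\widehat\Theta_t)$, and likewise $f(X_t)-\sum_{j\in X_t\setminus\widehat X_t}f(j\mid X_t\setminus j)\geq\sum_{j\in X_t\cap\widehat X_t}f(j\mid X_t\setminus j)$, so the inequality you need runs the wrong way: the budget check being at most $b$ does not bound $\widehat f_{X_t}(\widehat X_t;\widehat\Theta_t)$ by $b$.

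Second, the exchange argument for maximality does not close. A witness $X'$ for an admissible $\Theta'\supsetneq\widehat\Theta_t$ only needs to satisfy $\Theta'=X'\cap X_t$; nothing forces $X'$ to retain the elements of $\widehat X_t$, so there is no fixed cumulative cost into which $j$ can be ``swapped.'' Moreover, the permutation $\epsilon$ in (\ref{sorting_1}) orders elements by the benefit-to-cost \emph{ratio}, not by cost, and the saturation rule (\ref{inter-solution}) only certifies that the prefix of length $\widehat k+1$ exceeds $b$; an element $j=\epsilon_i$ with $i>\widehat k+1$ can have an arbitrarily small marginal cost, so appending it need not push any budget past $b$, and the contradiction you invoke never materializes. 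The paper's own proof takes a different (if informally argued) route here: it asserts that the sorted prefix $\widehat X_t$ contains every possible M-step output $X_{t+1}$, so that any $\Theta_t\subseteq\widehat X_t\cap X_t$ is automatically contained in $X_{t+1}\cap X_t$ and the upper-bound property is preserved; maximality then follows because, among all such $\Theta_t$, the largest one, $\widehat X_t\cap X_t$, gives the smallest $\widehat f_{X_t}(\cdot;\Theta_t)$ and hence the largest feasible space. If you want to repair your write-up, you would need to replace both the feasibility certificate and the exchange step with an argument of this containment type.
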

\begin{proof}
Because in the E step, we have no idea about the $X_{t+1}$, we need to estimate a $\widehat \Theta_t$ such that for all possible $X_{t+1}$, the constraint $f(X_{t+1})\leq \widehat f_{X_t}(X;\Theta)$ is always satisfied, so that 
$f(X_{t+1})\leq b$ is guaranteed.
Thus, according to $(\ref{Nemhauster_divergence1})$, we need to show that $f(j|\widehat \Theta_t)$ is larger than any  $f(j|\Theta_t)$. We prove this as follows.

First, according to   (\ref{estimation}), we have
 $\emptyset\subseteq \widehat\Theta_t \subseteq X_t$.
Then, (\ref{monotone}) shows that setting $\widehat\Theta_t = X_t$ leads to the smallest $f(\epsilon_i|E\backslash\epsilon_i)$.
At last, due to the sorting mechanism in (\ref{sorting_1}), the solution $\widehat X_t$ is the smallest subset containing elements from all possible $X_{t+1}$, which makes any $\Theta_t$ that satisfies
$\emptyset \subseteq \Theta_t \subseteq \widehat X_t\cap X_t$
is a subset of $ X_{t+1}\cap X_t$. 
Thus, we conclude that  $\forall X_{t+1}$,  $f(X_{t+1})\leq \widehat f_{X_t}(X_{t+1};\Theta_t)$ if $\emptyset \leq \Theta_t \leq \widehat X_t\cap X_{t}$.
Hence, in the feasible range, the optimal $\Theta$ is obtained by setting  $\widehat \Theta_t = \widehat X_t\cap X_{t}$ as it gives the smallest $\widehat f_{X_t}(X_{t+1};\Theta_t)$, i.e.,
$$\widehat f_{X_t}(X_{t+1};
\widehat\Theta_t)\leq 
\widehat f_{X_t}(X_{t+1};\Theta_t)$$
for all the feasible $\Theta_t$.
Therefore, it  leads to the largest feasible space.
\end{proof}
 
\begin{proposition}
[{\bf Monotonicity}]
The EM algorithm  monotonically improves the objective function value, i.e., $g(X)$ in the feasible space of SMCP, i.e., $g(X_0)\leq g(X_1)\leq
g(X_2)\leq \ldots$. 
\end{proposition}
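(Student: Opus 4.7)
The plan is to chain three facts already established in the paper: tightness of the modular lower bound at $X_t$ in (\ref{tight_g}), feasibility of $X_t$ itself inside the V-SMCP problem, and the lower-bound property $\widehat g_{X_t}(X) \le g(X)$. This is the discrete analogue of the standard monotonicity argument for EM with Kullback--Leibler divergence; the proof amounts to assembling pieces that are already in hand, together with an appeal to Theorem \ref{EM_opt} to keep the iterates feasible.

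First I would show, by induction on $t$, that $X_t$ is itself feasible for the V-SMCP problem at iteration $t$. Plugging $X = Y = X_t$ into the Nemhauser bound (\ref{Nemhauster_divergence1}) makes both correction sums vanish because $X_t \setminus X_t = \emptyset$, hence $\widehat f_{X_t}(X_t;\widehat\Theta_t) = f(X_t) \le b$ by the induction hypothesis (with base case $X_0 = \emptyset$). Consequently $X_t$ is an admissible candidate for the M step, and by the termination rule of Algorithm 1 the next iterate $X_{t+1}$ must satisfy $\widehat g_{X_t}(X_{t+1}) \ge \widehat g_{X_t}(X_t)$, for otherwise the loop would already have terminated.

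Next I would combine the tightness identity (\ref{tight_g}), $\widehat g_{X_t}(X_t) = g(X_t)$, with the lower-bound property applied at $X_{t+1}$, $g(X_{t+1}) \ge \widehat g_{X_t}(X_{t+1})$, to obtain the sandwich
\[
g(X_{t+1}) \;\ge\; \widehat g_{X_t}(X_{t+1}) \;\ge\; \widehat g_{X_t}(X_t) \;=\; g(X_t),
\]
which is exactly the desired monotonicity. To close the induction I then need $f(X_{t+1}) \le b$. This is where Theorem \ref{EM_opt} is invoked: it certifies $\widehat\Theta_t \subseteq X_t \cap X_{t+1}$, and then (\ref{monotone}) together with (\ref{Nemhauster_divergence1}) yield $f(X_{t+1}) \le \widehat f_{X_t}(X_{t+1};\widehat\Theta_t) \le b$, so $X_{t+1}$ is feasible for SMCP and the next iteration is well-defined.

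The main subtlety is this closing feasibility step. The Nemhauser upper bound (\ref{Nemhauster_divergence1}) is stated only for $\Theta = X \cap X_t$, so without the containment supplied by Theorem \ref{EM_opt} one could not pass from the M-step constraint $\widehat f_{X_t}(X_{t+1};\widehat\Theta_t)\le b$ to $f(X_{t+1})\le b$, and the induction would fail at the very next iteration. Once that containment is in hand, monotonicity reduces to the three-line sandwich above.
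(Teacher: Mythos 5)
Your proof is correct and follows essentially the same route as the paper's: the core is the identical three-step sandwich $g(X_{t+1}) \ge \widehat g_{X_t}(X_{t+1}) \ge \widehat g_{X_t}(X_t) = g(X_t)$, combining the lower-bound property of $\widehat g_{X_t}$, the M-step/termination rule, and the tightness identity (\ref{tight_g}). The only difference is that you additionally carry out the feasibility induction ($f(X_t)\le b$ for all $t$, closed via Theorem~\ref{EM_opt}) that the paper's proof leaves implicit; this is a worthwhile tightening of the same argument rather than a different one.
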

\begin{proof}
According to the tightness property in (\ref{tight_g}), we have
$g(X_t)=\widehat g_{X_t}(X_t)$.
Since the proposed EM algorithm leads to  increment of $\widehat g_{X_t}(X)$ at each iteration, we obtain $\widehat g_{X_t}(X_t)\leq \widehat g_{X_t}(X_{t+1})$. 
Moreover, the lower bound property of $\widehat g_{X_t}(X)$ results in 
$\widehat g_{X_t}(X_{t+1})\leq g(X_{t+1})$. Therefore, we obtain
$g(X_t) \leq  g(X_{t+1})$.
Hence, the monotonicity of the EM algorithm is proved.
\end{proof}

We next provide tightened, curvature-dependent approximation ratio for the proposed algorithms.
Curvature has
served to improve the approximation ratio for submodular maximization problems, e.g., from 
$(1-\frac{1}{e})$ to 
$\frac{1}{\kappa_g}(1-e^{-\kappa_g} )$
for monotone submodular maximization subject to a cardinality constraint \cite{conforti1984submodular} and matroid constraints \cite{vondrak2010submodularity}.
We first give the definition of curvature and then analytically prove our results.

Given a submodular function $g$,   the {\it  curvature} $\kappa_g$, which represents the deviation from modularity, is defined as 
 \begin{equation}
\label{curvature}
\kappa_g=1-\min_{k\in[n]}\frac{g(k|[n]\setminus k)}{g(k)}.
 \end{equation}
 The curvature, $\kappa_g$ measures the distance of $g$ from modularity, and 
$\kappa_g = 0$ if and only if $g$ is modular, i.e., 
$g(X) = \sum_{j\in X} g(j)$.
Next, we show the approximation ratio of $\widehat g(X)$ to $g(X)$ in terms of curvature.
\begin{theorem}
[{\bf Function $\widehat g$ Approximation Ratio\footnote{Theorem  \ref{approximation_ratio_g}
is independent of the constraint $f(X)$, and therefore  it applies to any submodular function.}}]
\label{approximation_ratio_g}
Given arbitrary $\pi$ and $X_t$, the approximation ratio given by $\widehat g_{X_t}(X)$ to $g(X)$ is $1-\kappa_g$, i.e.,
\begin{equation}
\label{g-approximation}
\widehat g_{X_t}(X) \geq
(1-\kappa_g) g(X), \quad \forall X\subseteq [n].
\end{equation}
\end{theorem}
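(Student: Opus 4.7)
The plan is to unpack the definition of $\widehat g_{X_t}(X)$ element-by-element, bound each summand using submodularity together with the curvature definition, and finally close the loop with subadditivity of $g$.

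First I would observe that for the permutation $\pi$ we have, for $j=\pi_i$,
\[
\widehat g_{X_t}(j)=g(S_i^\pi)-g(S_{i-1}^\pi)=g(j\mid S_{i-1}^\pi),
\]
so $\widehat g_{X_t}(j)$ is precisely a marginal gain of $g$ at $j$ with respect to the prefix set $S_{i-1}^\pi$. Since $j\notin S_{i-1}^\pi$, we have $S_{i-1}^\pi\subseteq [n]\setminus\{j\}$, and submodularity in the form of (\ref{submodularity2}) yields
\[
\widehat g_{X_t}(j)=g(j\mid S_{i-1}^\pi)\;\geq\;g(j\mid [n]\setminus\{j\}).
\]
This is the step where the particular choice of $\pi$ and $X_t$ drops out entirely, which is what makes the theorem hold for \emph{arbitrary} $\pi$ and $X_t$.

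Next I would apply the definition of curvature (\ref{curvature}): the minimum of $g(k\mid [n]\setminus k)/g(k)$ over $k\in[n]$ is exactly $1-\kappa_g$, so for every $j\in[n]$,
\[
g(j\mid [n]\setminus\{j\})\;\geq\;(1-\kappa_g)\,g(j).
\]
Combining the two inequalities and summing over $j\in X$ gives
\[
\widehat g_{X_t}(X)=\sum_{j\in X}\widehat g_{X_t}(j)\;\geq\;(1-\kappa_g)\sum_{j\in X} g(j).
\]

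Finally I would invoke subadditivity of a normalized monotone submodular function, namely $g(X)\leq\sum_{j\in X}g(j)$ (an immediate consequence of (\ref{submodularity1}) together with $g(\emptyset)=0$), to conclude
\[
\widehat g_{X_t}(X)\;\geq\;(1-\kappa_g)\sum_{j\in X} g(j)\;\geq\;(1-\kappa_g)\,g(X).
\]
There is no real obstacle here; the only mildly delicate point is noticing that the chain $\emptyset=S_0^\pi\subset\cdots\subset S_n^\pi=[n]$ forces each prefix to be a subset of $[n]\setminus\{\pi_i\}$, which is exactly what licenses pushing the marginal gain down to the worst-case conditioning set $[n]\setminus\{j\}$ used in the definition of $\kappa_g$.
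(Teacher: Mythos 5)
Your proof is correct and follows essentially the same route as the paper's: bound each $\widehat g_{X_t}(j)=g(j\mid S_{i-1}^\pi)$ below by $g(j\mid [n]\setminus\{j\})\geq(1-\kappa_g)g(j)$ via submodularity and the curvature definition, sum over $X$, and finish with subadditivity $g(X)\leq\sum_{j\in X}g(j)$. The only cosmetic difference is that you sum the per-element inequalities directly, whereas the paper phrases the same aggregation step as an induction on ratios.
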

\begin{proof}
The definition of $\widehat g(j)$ in (\ref{diff-sub}) is equivalently represented by $\widehat g_{X_t}(j)=  g(S_j^{\pi})- g(S^{\pi}_{j-1})
= g(j|S^{\pi}_{j-1})
$ for all $j\in [n]$.
Then, according to (\ref{submodularity2}), we have $\widehat g_{X_t}(j) =  g(j|S^{\pi}_{j-1}) \geq g(j|[n]\backslash j)$.
Dividing both sides by a positive number 
$g(j)$, we have
\begin{equation}
\frac{\widehat g_{X_t}(j)}{g(j)}
\geq  
   \frac{g(j|[n]\backslash j)}{g(j)}
\geq 
\min_{k\in[n]}
   \frac{g(k|[n]\backslash k)}{g(k)}, \quad \forall j\in[n].
\end{equation}
The most right-hand side of the above inequality is $1-\kappa_g$
according to the curvature definition in (\ref{curvature}). We therefore obtain
\begin{equation}\label{curverture_ineq}
\frac{\widehat g_{X_t}(j)}{g(j)}
\geq  
 1- \kappa_g, \quad  \forall j\in[n].
\end{equation}

Next, we extend the above inequality from an arbitrary element $j\in [n]$ to an arbitrary set $X\subseteq [n]$ by induction. 
Equation (\ref{curverture_ineq}) implies that
\begin{equation}
\frac{\widehat g_{X_t}(j_1)
+
\widehat g_{X_t}(j_2)}{g(j_1)+
 g(j_2)}
\geq  
 1- \kappa_g, \quad  \forall j_1, j_2\in[n].
\end{equation}
Then by induction, we have
\begin{equation}
\label{modular_ratio}
\frac{\sum_{j\in X}\widehat g_{X_t}(j)
}{\sum_{j\in X}g(j)}
\geq  
 1- \kappa_g, \quad  \forall X\subseteq[n].
\end{equation}
The numerator in the left-hand side of the above inequality is equivalent to $\widehat g_{X_t}(X)$  from (\ref{loewr_bound}).
Furthermore,   the submodularity of $g(X)$ indicates $g(X)\leq \sum_{j\in X}g(j)$.  Therefore, we can  further magnify the left-hand side of (\ref{modular_ratio}) and obtain
\begin{equation}
\frac{\widehat g_{X_t}(X)}
{g(X)}\geq
\frac{\sum_{j\in X}\widehat g_{X_t}(j)
}{\sum_{j\in X}g(j)}
\geq  
 1- \kappa_g,  \quad \forall X\subseteq[n].
\end{equation}
\end{proof}

Next, we show the approximation ratio of the proposed EM/SEM algorithm for $\widehat g_{X_t}(X)$ in a V-SMCP.
Let   $\textrm{OPT}_{\widehat g_{X_t}}$ denote the optimizer for  (\ref{P_max_modular}).

\begin{proposition}
\label{KP}
At each iteration, both the EM and SEM algorithms obtain a set $X_{t+1}$ such that
\begin{equation}
\label{g_hat_approx}
\widehat g_{X_t}(X_{t+1})\geq (1-\frac{2\Delta_{f}}{b}) \widehat g_{X_t}(\textrm{OPT}_{\widehat g_{X_t}}).
\end{equation}
\end{proposition}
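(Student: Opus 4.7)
The plan is to recognize the M step as greedy-by-ratio on a modular knapsack and then refine the classical ``one item away from the LP optimum'' argument to read off the factor $1-2\Delta_f/b$.

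First I would verify that the M-step problem reduces to a modular knapsack. Once $\widehat\Theta_t$ has been fixed by the E step (or set to $\emptyset$ in SEM), the definition in (\ref{Nemhauster_divergence1}) together with (\ref{abb_M}) shows that $\widehat f_{X_t}(X;\widehat\Theta_t)$ collapses to a constant depending only on $X_t,\widehat\Theta_t$ plus $\sum_{j\in X}c_j$, where $c_j\triangleq f(j|M\backslash j)$ matches exactly the denominator used in the sort (\ref{sorting2}). The objective $\widehat g_{X_t}$ is already modular by (\ref{loewr_bound}), so after absorbing the constant into the budget the problem becomes $\max_X\sum_{j\in X}\widehat g_{X_t}(j)$ subject to $\sum_{j\in X}c_j\leq b$, and the permutation $\mu$ is precisely greedy-by-ratio. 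I would then invoke the standard LP-relaxation analysis: processing the sorted elements fully up to index $\widehat m$ and then a fractional $\theta\in[0,1)$ share of $\mu_{\widehat m+1}$ solves the continuous relaxation exactly and upper bounds $\widehat g_{X_t}(\textrm{OPT}_{\widehat g_{X_t}})$, yielding the ``one-item'' inequality
\begin{equation*}
\widehat g_{X_t}(X_{t+1})+\widehat g_{X_t}(\mu_{\widehat m+1})\geq \widehat g_{X_t}(\textrm{OPT}_{\widehat g_{X_t}}).
\end{equation*}

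The delicate step is bounding the ``leftover item'' $\widehat g_{X_t}(\mu_{\widehat m+1})$ back in terms of the greedy value. Submodularity of $f$ gives $c_{\widehat m+1}\leq f(\mu_{\widehat m+1})\leq \Delta_f$. If $2\Delta_f\geq b$ the claim is vacuous (the right-hand side is non-positive while $\widehat g_{X_t}(X_{t+1})\geq 0$), so I would assume $\Delta_f<b/2$. The overflow condition $\sum_{j=1}^{\widehat m+1}c_j>b$ combined with $c_{\widehat m+1}<b/2$ then forces $\sum_{j=1}^{\widehat m}c_j>b/2$. Summing the ratio-sort inequality $\widehat g_{X_t}(\mu_j)/c_j\geq \widehat g_{X_t}(\mu_{\widehat m+1})/c_{\widehat m+1}$ over $j\leq \widehat m$ gives $\widehat g_{X_t}(X_{t+1})\geq (b/2)\widehat g_{X_t}(\mu_{\widehat m+1})/c_{\widehat m+1}\geq (b/(2\Delta_f))\widehat g_{X_t}(\mu_{\widehat m+1})$, i.e.\ $\widehat g_{X_t}(\mu_{\widehat m+1})\leq (2\Delta_f/b)\widehat g_{X_t}(X_{t+1})$. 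Substituting into the LP inequality and rearranging with the elementary estimate $1/(1+x)\geq 1-x$ for $x\in[0,1]$ produces exactly the target bound.

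The principal obstacle I anticipate is not the algebra but the modular reduction: one must be careful that the stopping criterion (\ref{m_max}) and the V-SMCP constraint (\ref{P_max_modular}) are being reconciled through the same constant absorption, so that the LP relaxation of the greedy instance really does dominate $\widehat g_{X_t}(\textrm{OPT}_{\widehat g_{X_t}})$. Once this is in hand, every remaining ingredient is textbook: the ratio sort, the LP gap of one item, and the submodular bound $c_{\widehat m+1}\leq\Delta_f$. Because the whole argument uses only the ratio sort and the modular structure --- never the specific value of $\widehat\Theta_t$ --- the identical proof covers the EM variant (with $\widehat\Theta_t$ from (\ref{estimation})) and the SEM variant (with $\widehat\Theta_t=\emptyset$) simultaneously, as claimed.
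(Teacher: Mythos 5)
Your proof is correct and takes essentially the same route as the paper's: both arguments rest on the ratio sort (\ref{sorting2}), the one-item LP-gap fact $\sum_{k=1}^{\widehat m+1}\widehat g_{X_t}(\mu_k)\geq \widehat g_{X_t}(\textrm{OPT}_{\widehat g_{X_t}})$, and the submodular bound $f(\mu_{\widehat m+1}|M\backslash\mu_{\widehat m+1})\leq f(\mu_{\widehat m+1})\leq\Delta_f$. The only divergence is in the final algebra: you extract the factor $1-\frac{2\Delta_f}{b}$ via the half-budget case split and the estimate $\frac{1}{1+x}\geq 1-x$, whereas the paper bounds $\widehat g_{X_t}(\mu_{\widehat m+1})\leq \frac{\Delta_f}{b}\sum_{k=1}^{\widehat m+1}\widehat g_{X_t}(\mu_k)$ directly from the mediant inequality together with the overflow condition $\sum_{k=1}^{\widehat m+1}f(\mu_k|M\backslash\mu_k)\geq b$ and then rearranges.
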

The tedious but straightforward proof for this proposition is provided in the Appendix. 
Yet, this proposition paves the way to the proof of the approximation ratio of the EM algorithm for $g(X)$ in V-SMCP.
Let \textrm{OPT}  denote the optimizer for (\ref{P_max}), and with the knowledge of Theorem \ref{approximation_ratio_g} and Proposition \ref{KP} in mind, w be have the following  result.
\begin{theorem}
[{\bf Approximate Optimality}]
\label{approx_ratio}
 The results of both EM and SEM algorithm, i.e., $g(X_{t+1})$ hold the approximation ratio  $(1-\kappa_g)(1-\frac{2\Delta_{f}}{b})$,
 i.e.,
\begin{equation}
g(X_{t+1})\geq (1-\kappa_g)(1-\frac{2\Delta_{f}}{b}) g(\textrm{OPT}).
\end{equation}
\end{theorem}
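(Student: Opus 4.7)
The plan is to chain two already-established results: Proposition \ref{KP}, which gives the knapsack-style approximation ratio $(1-\frac{2\Delta_{f}}{b})$ for $X_{t+1}$ against the V-SMCP optimum in the modular surrogate $\widehat g_{X_t}$, and Theorem \ref{approximation_ratio_g}, which gives the curvature-dependent factor $(1-\kappa_g)$ relating the surrogate pointwise to the true submodular objective $g$. The lower-bound property $\widehat g_{X_t}(X)\leq g(X)$ and the tightness identity (\ref{tight_g}) serve as the glue that transfers the guarantee from $\widehat g_{X_t}$ back to $g$.

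Concretely, I would carry out the following chain. First, apply the lower-bound property to obtain $g(X_{t+1})\geq \widehat g_{X_t}(X_{t+1})$. Second, invoke Proposition \ref{KP} to write $\widehat g_{X_t}(X_{t+1})\geq \bigl(1-\tfrac{2\Delta_{f}}{b}\bigr)\,\widehat g_{X_t}(\textrm{OPT}_{\widehat g_{X_t}})$. Third, argue that $\textrm{OPT}$ belongs to the feasible region of the V-SMCP (\ref{P_max_modular}) under the variational parameter $\widehat\Theta_t$ furnished by the E step, so that optimality of $\textrm{OPT}_{\widehat g_{X_t}}$ yields $\widehat g_{X_t}(\textrm{OPT}_{\widehat g_{X_t}})\geq \widehat g_{X_t}(\textrm{OPT})$; this feasibility step is where Theorem \ref{EM_opt} enters, since it certifies that $\widehat\Theta_t$ gives the largest admissible feasible region among all valid choices of $\Theta_t$. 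Fourth, apply Theorem \ref{approximation_ratio_g} at $X=\textrm{OPT}$ to obtain $\widehat g_{X_t}(\textrm{OPT})\geq (1-\kappa_g)\,g(\textrm{OPT})$. Multiplying the resulting inequalities in sequence produces the stated bound, and exactly the same chain applies verbatim to SEM since Proposition \ref{KP} is stated for both variants.

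The main obstacle is the third step. Because $\widehat f_{X_t}(\,\cdot\,;\widehat\Theta_t)$ is an \emph{upper} surrogate of $f$, the V-SMCP constraint is in general strictly more restrictive than $f(X)\leq b$, so $\textrm{OPT}$ is not automatically V-SMCP-feasible. Resolving this calls for careful use of Theorem \ref{EM_opt}: with $\widehat\Theta_t=X_t\cap \widehat X_t$ chosen via the sorted-greedy rule (\ref{sorting_1}), the V-SMCP feasible region is expanded to the maximal set on which the upper-bound property $\widehat f_{X_t}(X;\widehat\Theta_t)\geq f(X)$ still holds, which is exactly the scenario the E step is designed to secure. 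Everything else in the proof is direct substitution; the delicate bookkeeping lies entirely in this feasibility argument, and a fully rigorous write-up will need to unpack how the monotonicity (\ref{monotone}) and the efficiency-based ordering in (\ref{sorting_1}) interact so that any SMCP-feasible competitor, in particular $\textrm{OPT}$, is captured within the enlarged V-SMCP region.
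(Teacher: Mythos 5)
Your chain---lower-bound property, Proposition~\ref{KP}, optimality of $\textrm{OPT}_{\widehat g_{X_t}}$ over $\textrm{OPT}$, then Theorem~\ref{approximation_ratio_g} at $X=\textrm{OPT}$---is exactly the paper's proof, just written in the reverse order of its displayed ratio inequality. The feasibility worry you raise in your third step is legitimate, but the paper does not resolve it either: it simply asserts $\widehat g_{X_t}(\textrm{OPT}_{\widehat g_{X_t}})\geq \widehat g_{X_t}(\textrm{OPT})$ without verifying that $\textrm{OPT}$ satisfies the tighter V-SMCP constraint, so your proposal matches (and is in fact more candid than) the published argument.
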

\begin{proof}
Since $\textrm{OPT}_{\widehat g_{X_t}}$ is the optimizer of $\widehat g_{X_t}(X)$, we have the inequality
 $\widehat g(\textrm{OPT}_{\widehat g_{X_t}})\geq \widehat g(\textrm{OPT})\geq 0$. Further, due to $g(X)\geq 0$ for all $X\subseteq [n]$, and 
following from (\ref{g-approximation}), we  obtain $\widehat g(\textrm{OPT})\geq 0$. 
We then have
$\widehat g_{X_t}(\textrm{OPT}_{\widehat g_{X_t}})\geq \widehat g_{X_t}(\textrm{OPT})\geq 0,$
 which results in 
\begin{equation}
    \frac{g(\textrm{OPT})}{\widehat g_{X_t}(\textrm{OPT}_{\widehat g_{X_t}})}\leq \frac{g(\textrm{OPT})}{\widehat g_{X_t}(\textrm{OPT})}\leq
    \frac{1}{1-\kappa_g},
\end{equation}
where the second inequality is due to Theorem \ref{approximation_ratio_g} by replacing $X$ with $\textrm{OPT}$ in (\ref{g-approximation}).
Then, we obtain $$\widehat g_{X_t}(\textrm{OPT}_{\widehat g_{X_t}})
\geq
(1-\kappa_g)g(\textrm{OPT}).
$$
Substituting (\ref{g_hat_approx}) to the left-hand side of the above inequality, we obtain
$$
\widehat g_{X_t}(X_{t+1})
\geq (1-\kappa_g)(1-\frac{2\Delta_{f}}{b})g(\textrm{OPT}).
$$
\end{proof}

\section{Experiments}
\label{section:experiments}
\begin{figure*}[hbt!]
\centering
\subfigure[PCodes Dataset]{
\begin{minipage}[t]{0.26\linewidth}
\centering
\includegraphics[width=1.8in]{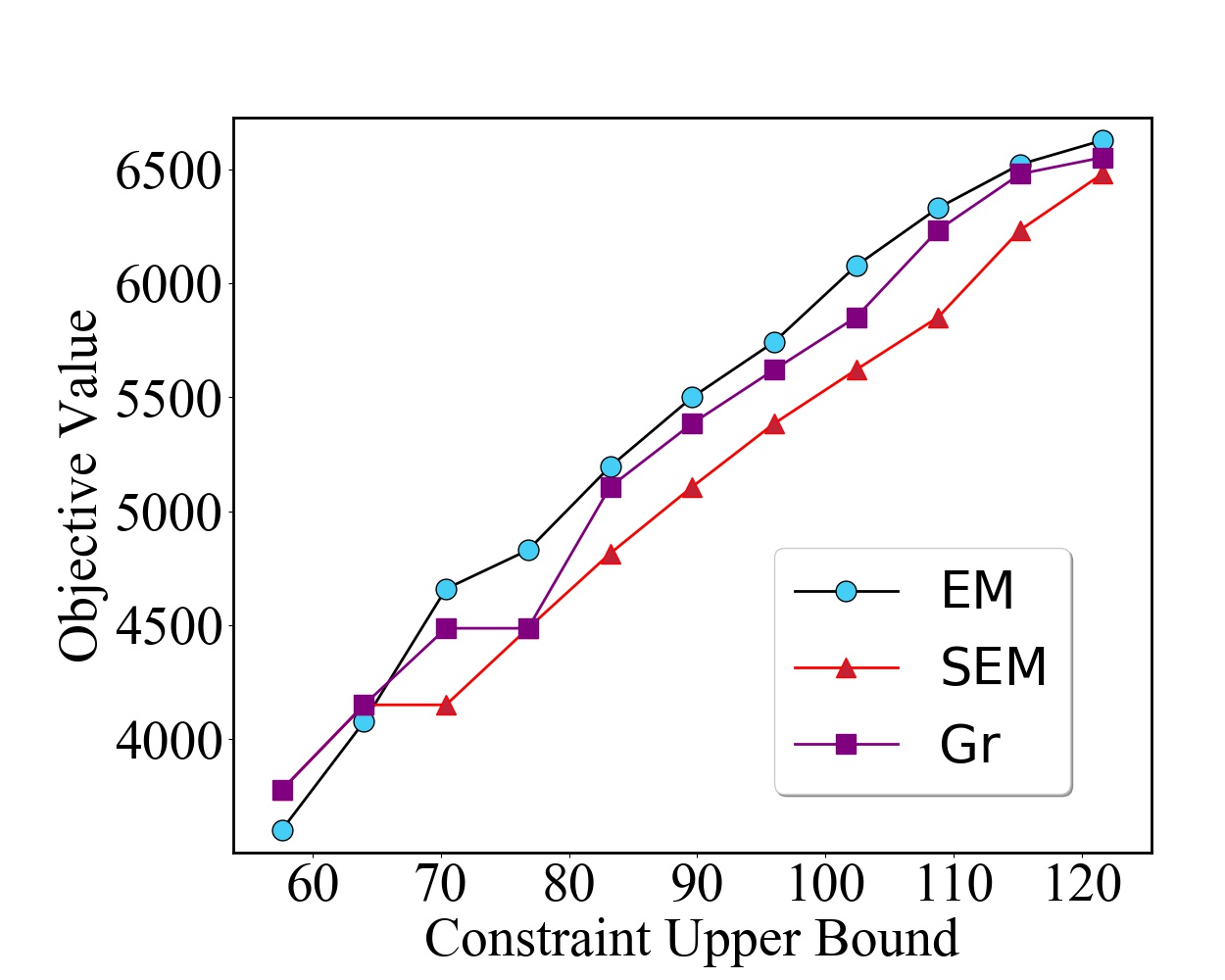}
\end{minipage}%
}%
\subfigure[Gap-A]{
\begin{minipage}[t]{0.26\linewidth}
\centering
\includegraphics[width=1.8in]{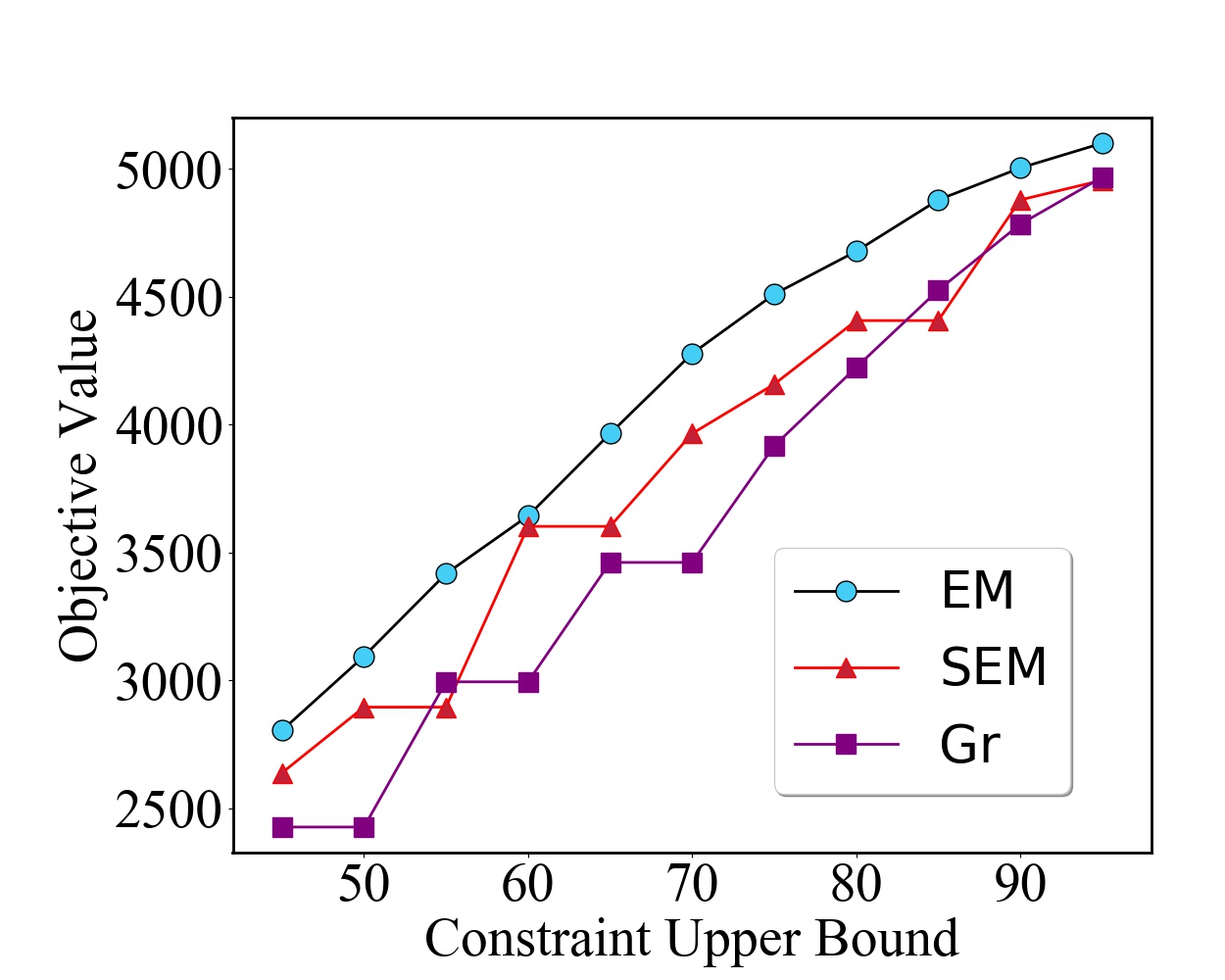}
\end{minipage}%
}%
\subfigure[Chess]{
\begin{minipage}[t]{0.26\linewidth}
\centering
\includegraphics[width=1.8in]{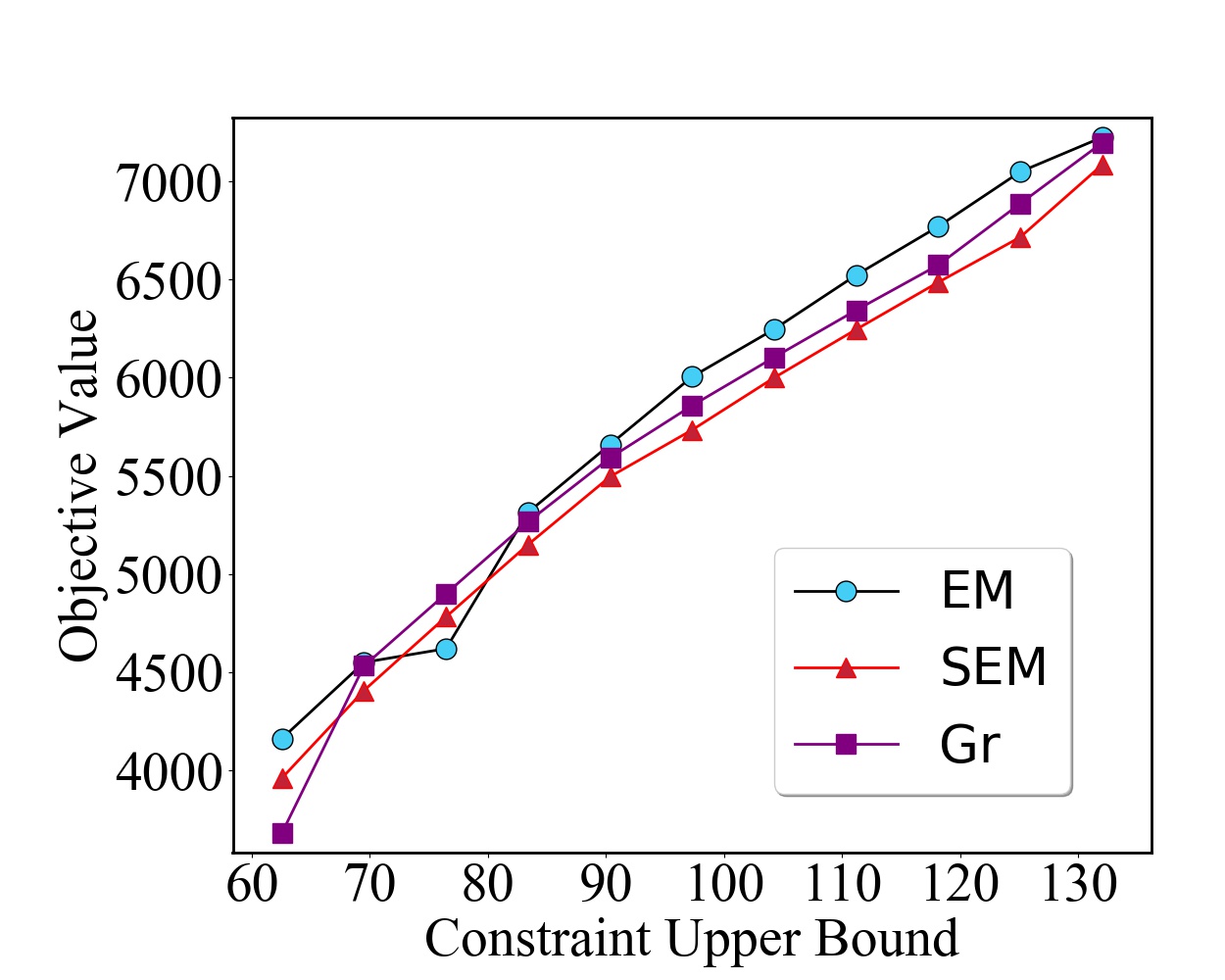}
\end{minipage}
}%
\subfigure[FPP]{
\begin{minipage}[t]{0.26\linewidth}
\centering
\includegraphics[width=1.8in]{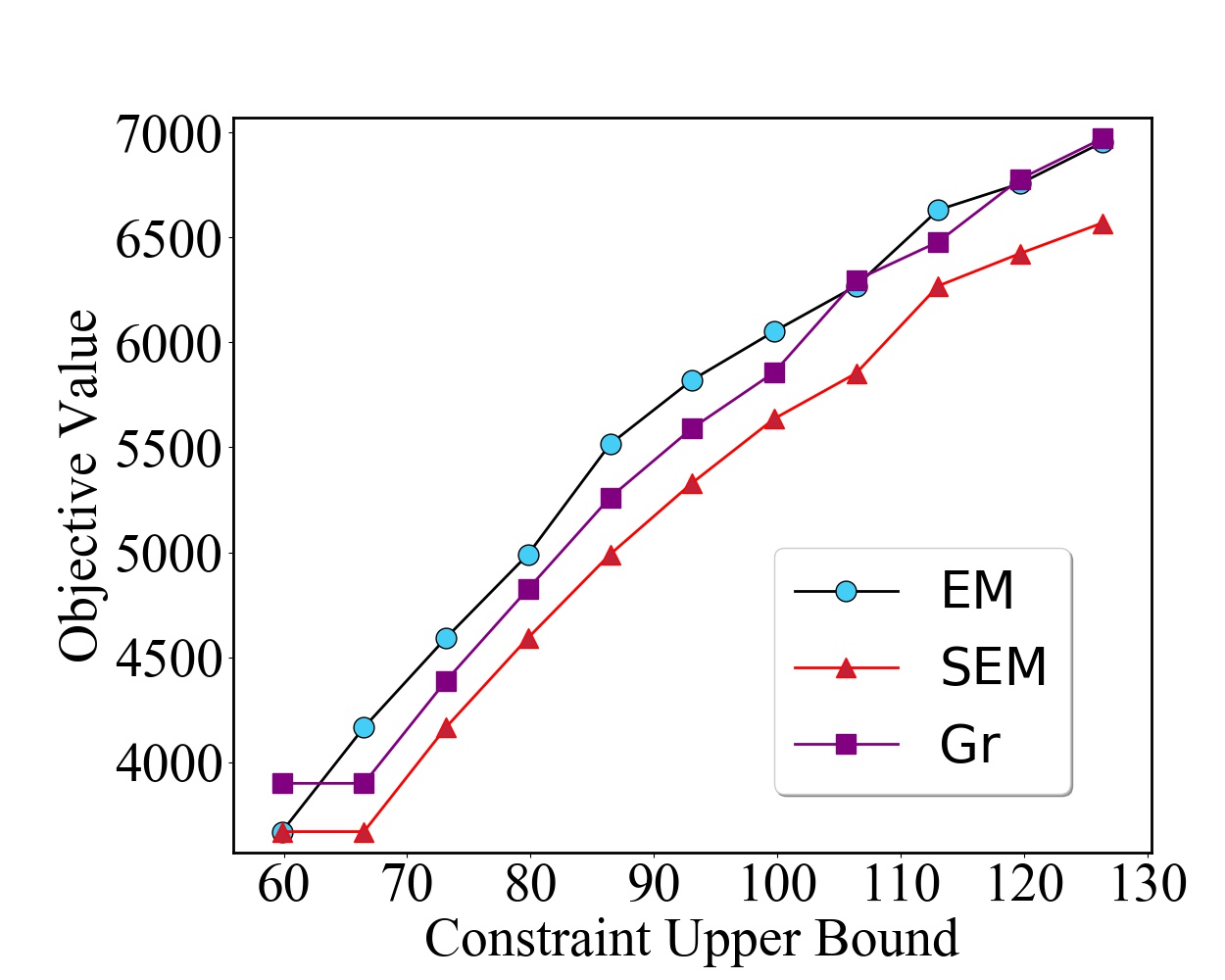}
\end{minipage}
}%
\centering
\caption{Comparison of objective values with public data sets. \label{fig:FL_obj}}
\end{figure*}

\begin{figure*}[hbt!] 
\centering
\subfigure[Upper bound: 65.]{
\begin{minipage}[t]{0.26\linewidth}
\centering
\includegraphics[width=1.8in]{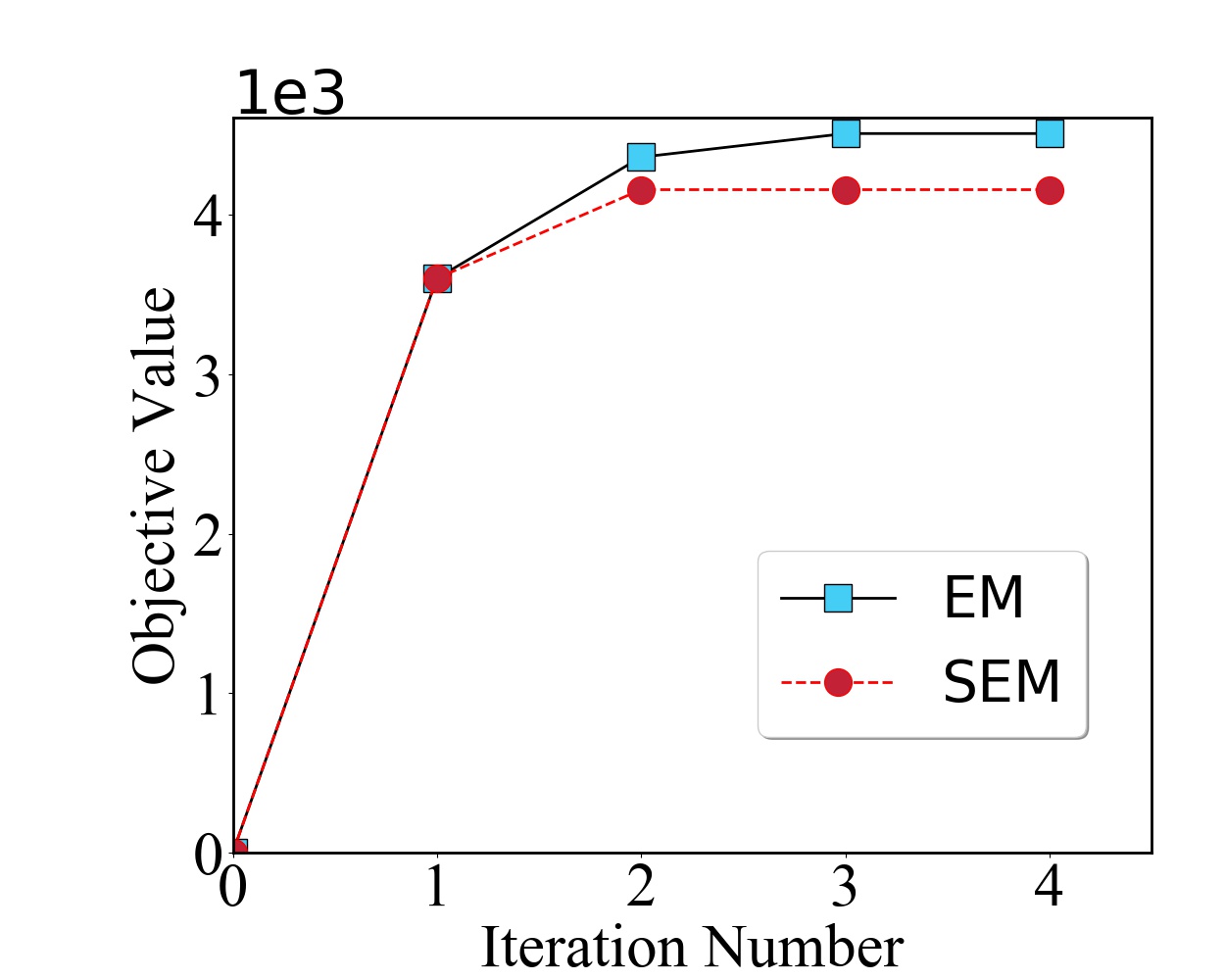}
\end{minipage}%
}%
\subfigure[Upper bound: 70.]{
\begin{minipage}[t]{0.26\linewidth}
\centering
\includegraphics[width=1.8in]{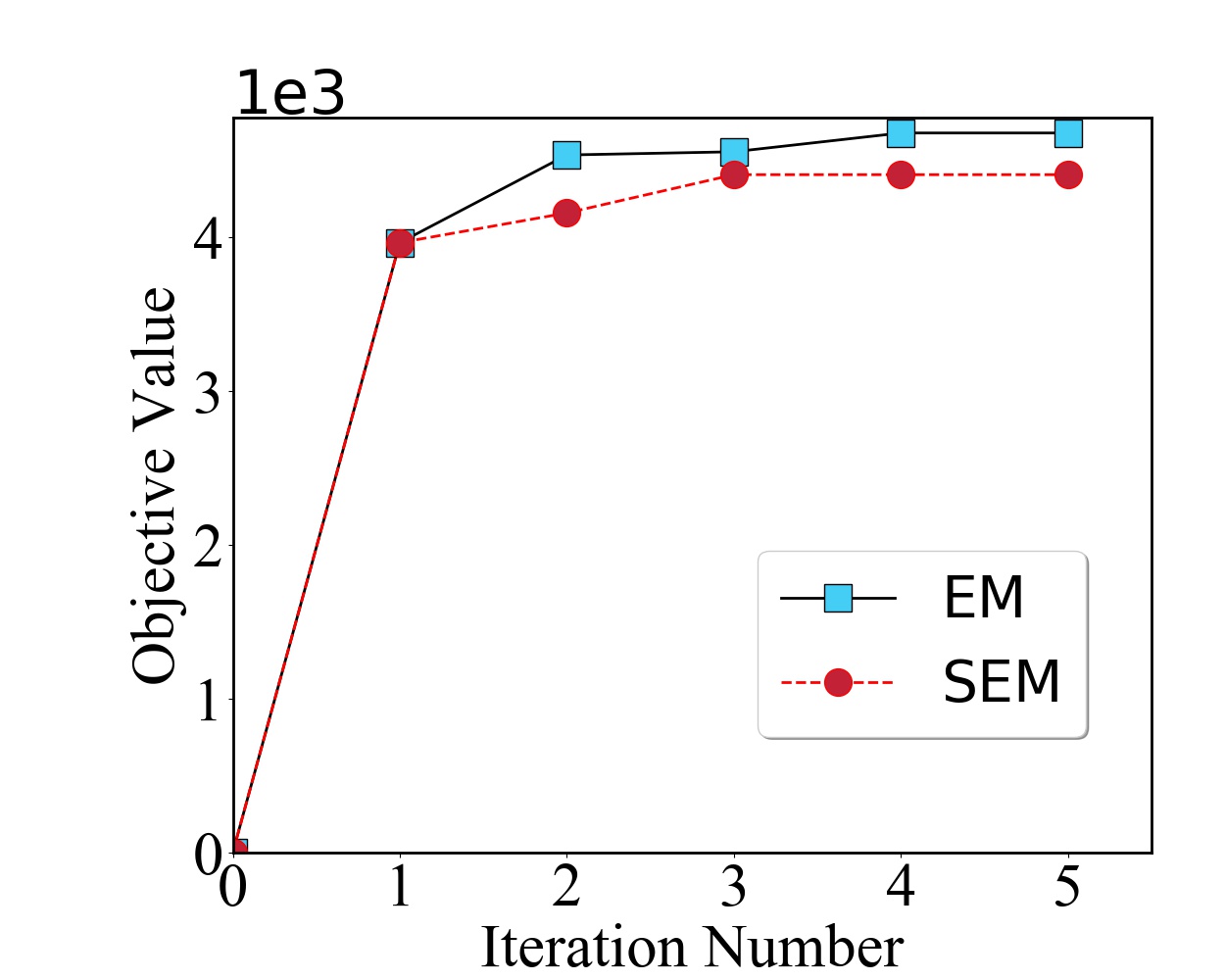}
\end{minipage}%
}%
\subfigure[Upper bound: 85.]{
\begin{minipage}[t]{0.26\linewidth}
\centering
\includegraphics[width=1.8in]{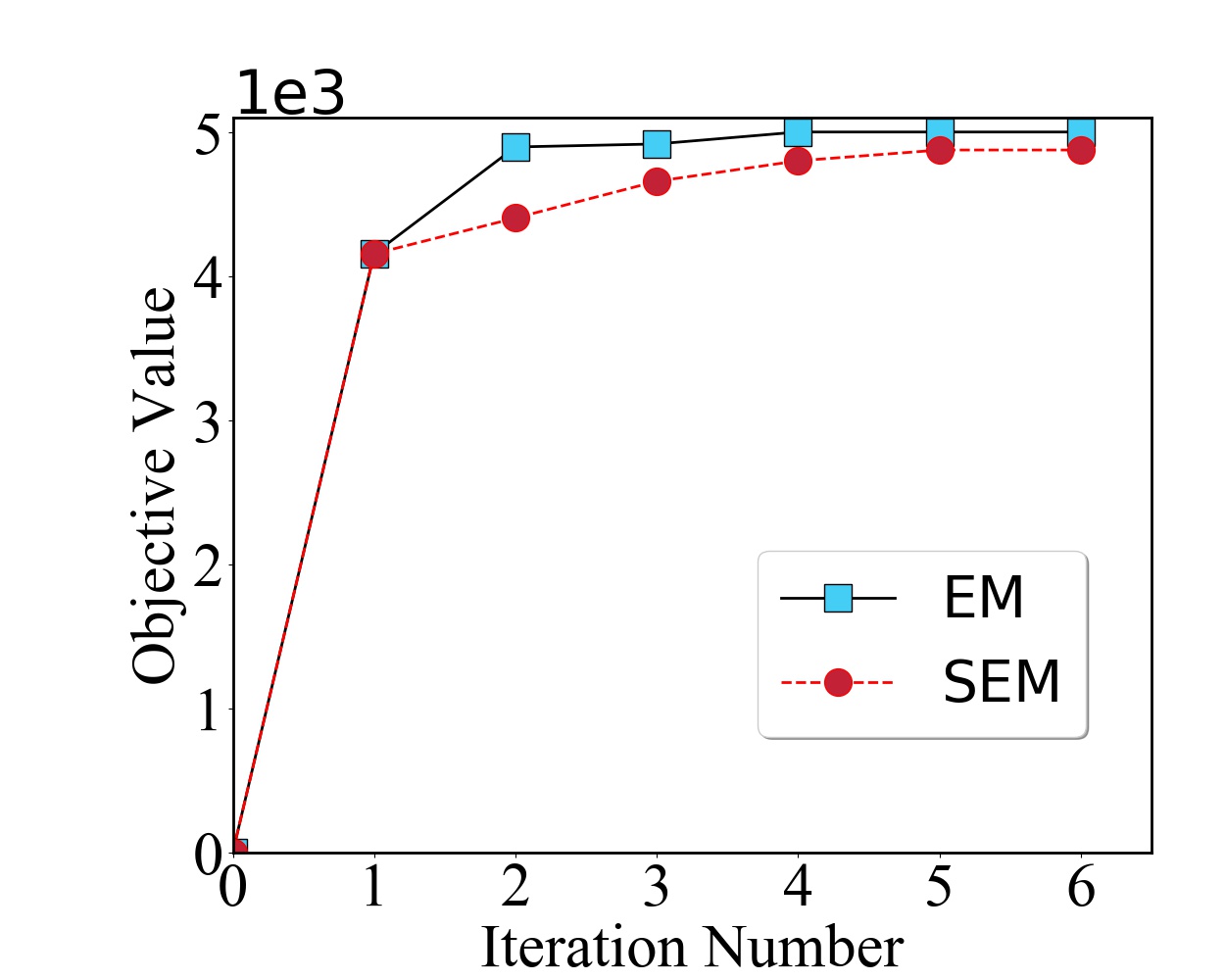}
\end{minipage}
}%
\subfigure[Upper bound: 90.]{
\begin{minipage}[t]{0.26\linewidth}
\centering
\includegraphics[width=1.8in]{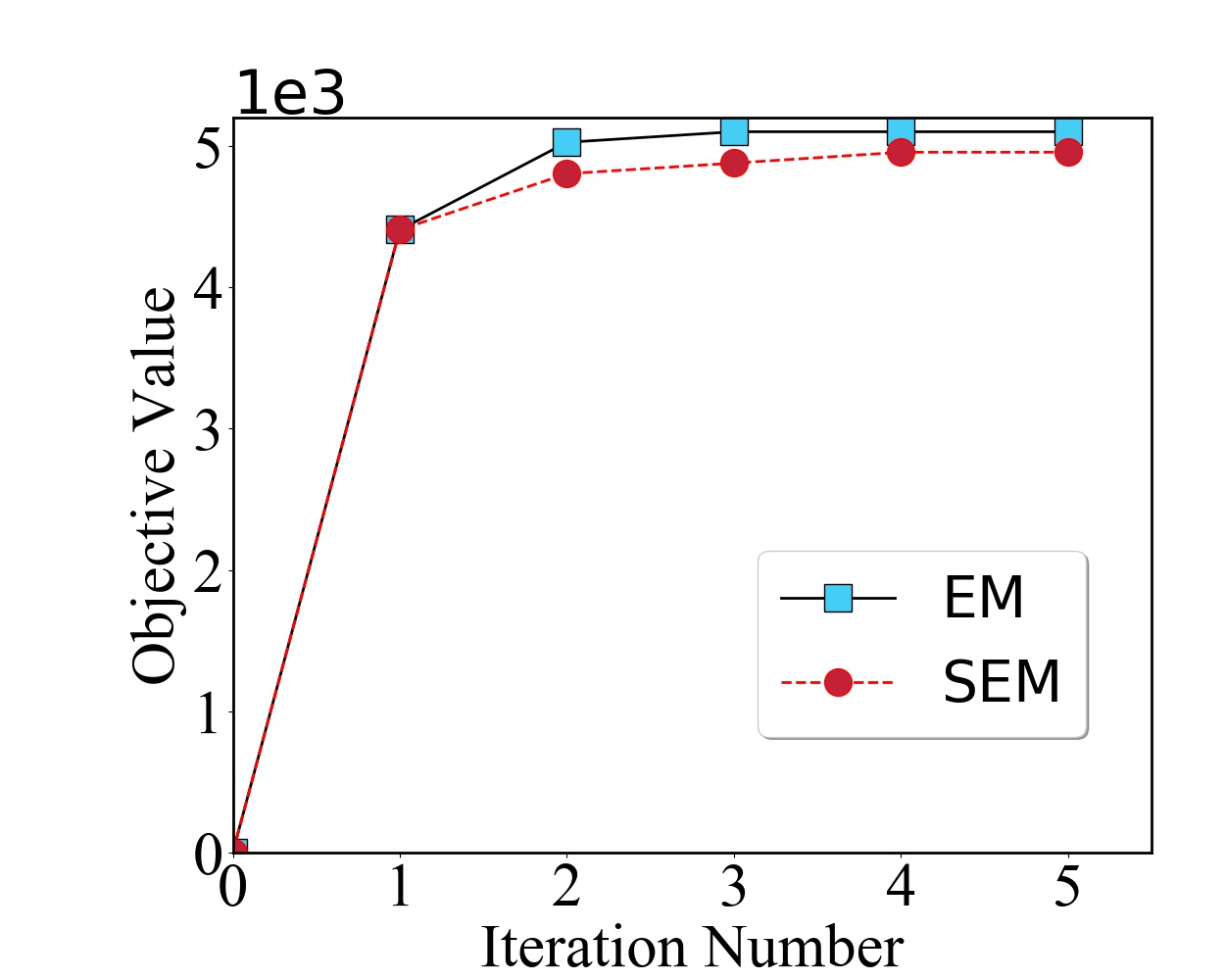}
\end{minipage}
}%
\centering
\caption{Convergence and monotonicity of EM and SEM algorithms with different upper-bound constraints for the Gap-A data set.}
\label{fig:FL_iter}
\end{figure*}
\begin{figure}[hbt!]
\centering
\includegraphics[width=0.5\textwidth]{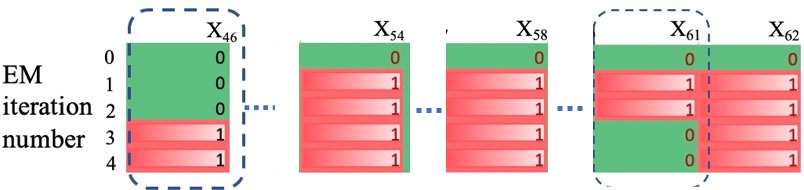}
\caption{${X_i}$ updating process of the EM algorithm. In this example, the $61$-th element was first selected in the $1$-st iteration and later removed in the $3$-rd iteration.}
\label{fig:visual_iter}
\end{figure}

\begin{figure*}[htbp!]
\centering
\subfigure[Data Set 1]{
\begin{minipage}[t]{0.26\linewidth}
\centering
\includegraphics[width=1.7in]{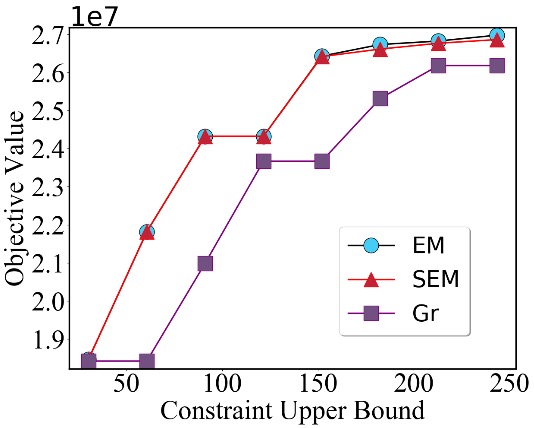}
\end{minipage}%
}%
\subfigure[Data Set 2]{
\begin{minipage}[t]{0.26\linewidth}
\centering
\includegraphics[width=1.7in]{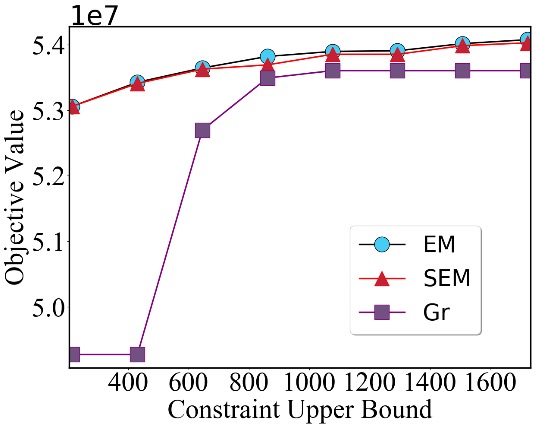}
\end{minipage}%
}%
\subfigure[Data Set 3]{
\begin{minipage}[t]{0.26\linewidth}
\centering
\includegraphics[width=1.7in]{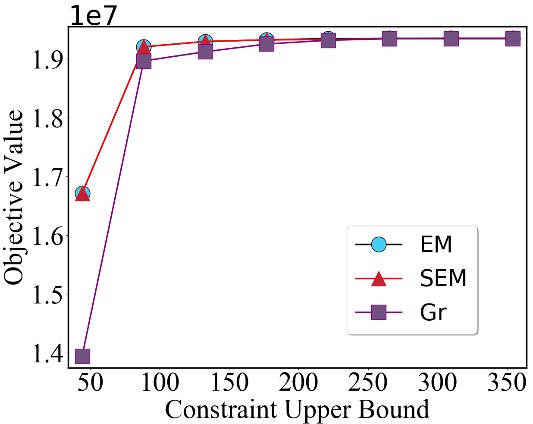}
\end{minipage}
}%
\subfigure[Data Set 4]{
\begin{minipage}[t]{0.26\linewidth}
\centering
\includegraphics[width=1.7in]{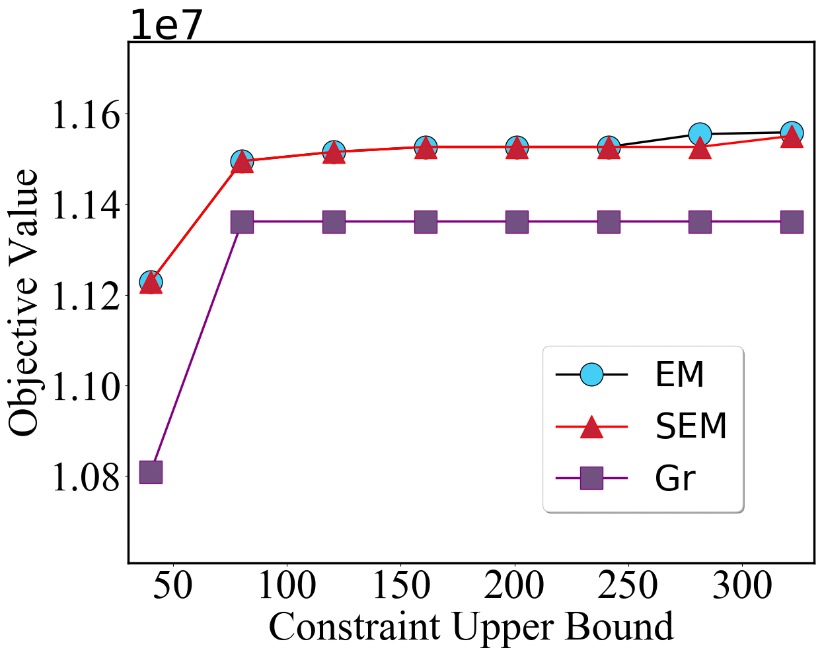}
\end{minipage}
}%
\centering
\caption{Comparison of objective values with different settings for the $4$ different data sets.}
\label{realdata}
\end{figure*}

\begin{figure}[htbp!]
\centering
\includegraphics[width=0.49\textwidth]{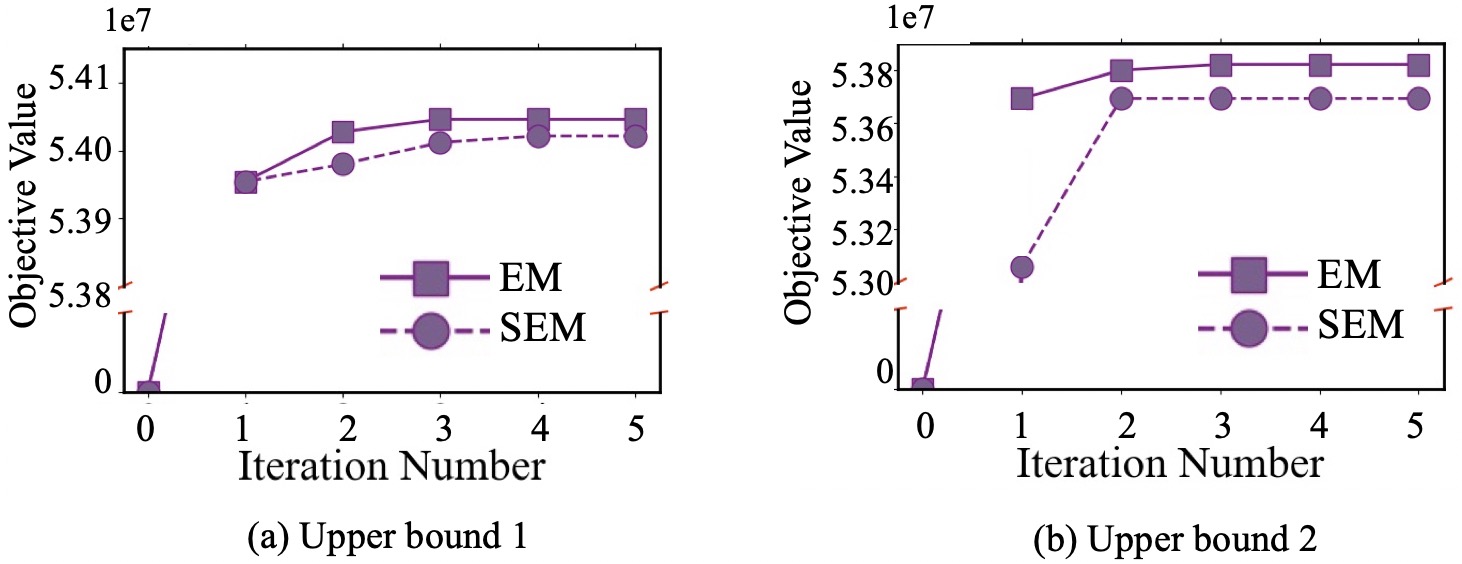}
\caption{Convergence and monotonicity of EM and SEM algorithms with two different upper-bound constraints. }
\label{fig:iter_obj2}
\end{figure}

Since first proposed in \cite{iyer2013submodular},  the SMCP  has been identified for a breadth of applications ranging from training the most accurate classifier subject to process unfairness 
constraints \cite{grgic2018beyond}, automatically designing convolutional neural networks to maximize accuracy within a given forward
time  constraint
\cite{hu2019automatically}
to shifting opinions in a social network through leader selection \cite{yi2019shifting}.

In order to understand the mechanism,  effectiveness, and application potential of the proposed variational framework and EM algorithm for SMCP, 
we start on the public data set and demonstrate the performance advantages over existing methods.
After that, we test the performance in the production environment, first on decision rule selection for fraud transaction detection, and then go further to train a interpretable  classifier
that covers truth positive
in the feature space well, and 
control the false positive within a predefined bound due to production requirement. 

\subsection{Performance on Discrete Location  Data Sets}
To compare the performance of our EM algorithm with that of existing methods, 
we consider four bipartite graphs from the public discrete location data sets \cite{FLdata} including an instance on perfer codes (PCodes), an instance on chess-board (Chess),
an instance on finite projective planes (FPP), and
an instance on large duality gap (Gap-A) 
with
$128$, $144$, $133$, and $100$ nodes for each type of the corresponding bipartite graphs, respectively.
For more detailed information  of these data sets, please refer to \cite{FLdata}. 
Fig. \ref{fig:bipartite_graph} is a running example of this test.
A random value, which is uniformly sampled from $1$ to $100$, is assigned to each circle node, and our goal is to choose a subset of the square nodes to maximize the total sum-value of the covered circle nodes subject to an upper bound constraint of the total number of the square nodes.  

We compare the greedy (Gr) algorithm, which was proposed in the classical work \cite{iyer2013submodular} and has been widely applied for different applications.
Without an E step, it is analogous to the M step in the EM algorithm with a permutation $\epsilon$ such that
\begin{equation}
\widetilde{\epsilon}(i) \in \operatorname{argmax}\left\{g\left(j | S_{i-1}^{\pi}\right) | j \notin S_{i-1}^{\pi}, f\left(S_{i-1}^{\pi} \cup\{j\}\right) \leq b\right\}.
\end{equation}
It was shown that Gr shows best performance in most experiments in \cite{iyer2013submodular}.
We, therefore, compare the EM algorithm with the Gr as well as the SEM algorithms.
The ellipsoidal approximation method in \cite{iyer2013submodular} is not applied here due to high computational complexity.

By considering $11$ upper bounds in each kind of data set,  we thus compare the performances of different algorithms in a total of $44$ experiments.
As shown in Fig.~\ref{fig:FL_obj}, our EM algorithm outperforms all other methods in  all the $44$ experiments except the only case when the upper bound is $60$ in the FPP data set.
Gr algorithms and SEM have overlaps with each other in some settings, yet most of the time Gr outperforms SEM.
Interestingly, in the sub-figure (b), we notice that the Gr algorithm's objective values cannot be increased when the constraint upper bound is increased from $55$ to $60$ as well as from $65$ to $70$. SEM also suffers from the same problem when the constraint upper bound is increased from $50$ to $55$, $60$ to $65$, and $80$ to $85$. Similar problems can also be identified for Gr and SEM in other data sets. However, it is rare to happen to EM. Thus, the experiment demonstrates that our EM algorithm, which enlarges the approximate feasible space in the E step, makes a better use of the feasible space of the SMCP. 

We further test the convergence rate and the monotonicity of the EM algorithm by
fixing the data set to be Gap-A and choosing four different upper bounds, i.e.,
$65$, $70$, $85$, and $90$.
Fig.~\ref{fig:FL_iter}
 shows the objective value versus EM/SEM iteration number.
It demonstrates that
our EM algorithm converges quickly within $3$-$5$ iterations, and the objective value increases monotonically, which is consistent with Proposition 2.
Note that as the initial values are set to be $\emptyset$, the first updates of EM and SEM are
the same and hence the corresponding objective values after first iterations are the same.

To get deeper insight of the EM algorithm, we further demonstrate part of the updating process of it. We take the computing for PCodes data set as an example and set the bound to be $60$. As shown in Fig.~\ref{fig:visual_iter},
 $X_{61}$ was first selected in the first and second iterations and later was removed in the third iteration.
In contrast to the Gr algorithm, which keeps expanding the solution set by adding new element, our EM algorithm select solutions dynamically. This dynamic provides  the capability to obtain a better result.

\setlength\tabcolsep{6pt}
\centering
\begin{table*}[htbp!]
\caption{Summary of transaction data set}
\label{table:close_table}
\begin{tabular}{@{}lcccccccccc@{}}
\toprule
\ \# samples & \# fraud & \# normal & \# features & \# categorical features & \# continuous features \\ \midrule
\ 50,357 & 369 & 49,988 & 50 & 26 & 24 \\
 \bottomrule
\end{tabular}
\label{table:data}
\end{table*}

\setlength\tabcolsep{3pt}
\begin{table}[htbp!]
\caption{Classification performance}
\label{table:classifer}
\centering
\begin{tabular}{@{}lccccc@{}}
\toprule
\textbf{Method} & Fraud coverage & Interruption rate \\ \midrule
Decision Tree & 82.16$\%$ & 1.05$\%$  \\
EM & {83.73$\%$} & {0.96$\%$}   \\ \bottomrule
\end{tabular}
\end{table}

\justify
\subsection{Performance on Fraud Detection Data Sets}
Beyond the public data set experiment, we go further to a practical applications. In our online payment systems, we have a bunch of rules for detecting fraud transactions. Some of them are obtained based on humans experience, and some of them are given by machine learning models like decision tree. Some of these rules could be too aggressive that not only covers the frauds   but also interrupt a lot of normal transactions. Our goal is to select rules that can cover as much fraud  amounts as possible and in the meantime to make sure the interrupted  transaction amount below a predefined value.

We consider four data sets in four different local areas, where each area has their own detection rules due to different attributes in each area.
Each data set consists of transaction index and their labels (fraud or not), and the list of rules that cover each transaction.
There are in total of $1200$, $10052$, $1600$, and $2400$ transactions in each data set, and the number of rules are $85$, $98$, $112$, and $92$ respectively.

As shown in Fig.~\ref{realdata} that our EM algorithm outperforms all other methods consistently for different upper bounds as well  as in different data sets. Furthermore, Fig.~\ref{fig:iter_obj2} shows the objective value of  as a function of the iteration number for data set 2 with different upper bounds. It is demonstrated that our EM algorithm also converges quickly within $3$-$5$ iterations in the industrial environment.

\subsection{Application to Interpretable Classifier}
Following the same context of fraud  detection, we further go beyond the rules selection scenario by modeling the problem as designing an iterpretable classifier based on SMCP.
From the lens of the classier, we are interested in maximizing the true positive subject to an upper bounded false negative. 

More specifically,
given a bunch of features for each transactions, we first apply the efficient F-P algorithm \cite{han2000mining} for mining the frequent fraud transaction patterns/rules.
We limit the maximum rule length to be $4i$ to make it more interpretable.  
Let $[n]$ denote the set of rules obtained.
To detect as many fraud value as possible (which is equivalently to maximize the truth positive),  we 
maximize the following objective function:
$$g(X) =  v\left(
    \cup_{i\in X}\Ccal\Rcal(i)\right),$$
where $X\subseteq [n]$, $\Ccal\Rcal(r_i)$ is the set of frauds covered by rule $i$, and $v(\cdot)$ is the total amount of fraud transaction value covered by $X$.
Moreover, the number of interrupted transactions, i.e., normal transactions but classified mistakenly, can be denoted by 
$$f(X)=  \abr{\cup_{i\in X}\Ccal(i)\setminus \mathcal{CR}(i) },$$
where $\mathcal{C}(i)$ denotes all the transactions covered by rule $i$, either correctly or wrongly.
We then can train a classifier that is consist of the rules selected by maximizing $g(X)$ subjective to an constraint that $f(X)<b$. 
According to the submodularity definition in  (\ref{submodularity2}),  both $f(X)$ and $g(X)$ are  monotonic submodular functions.
Consequently, training the classifer is equivalent to solving an SMCP. We therefore apply our EM algorithm to train this classifier.

We summarize the data set in Table~\ref{table:data} and  split  $75\%$ of the data into a training set and $25\%$ of the data into a testing set. For performance comparison,
we choose a decision tree with a maximum depth of $4$.
We summarize the result in Table~\ref{table:classifer}. It shows that the EM based method achieves performance that covers  more fraud amount and also achieves less 
interruptions.
The advantages could come from the formulation that  builds the classifier, which
exchanges false positive and true negative to identify  as many frauds as possible in the feasible space.

\section{Conclusions}
In this paper, we have proposed a novel variational frame based on the Namhauser divergence for the submodular maximum coverage problem (SMCP). The proposed estimation-and-maximization (EM) method monotonically improves optimization performance in a few iterations.
We have further    
proved a curvature dependent approximate factor for the EM method.
Empirical results on both public data sets and industrial problems in production environment have shown evident performance  improvement over state-of-the-art algorithms.

%

\appendix
\section{Proof of Proposition \ref{KP}}
For the independence of the appendix, we repeat (\ref{sorting2}) with the first $\widehat m+1$ terms as below.
\begin{equation}
\frac{\widehat g^{\pi}_{X_t}(\mu_1)}{ f(\mu_1|M\backslash{\mu_1})}
\geq 
\frac{\widehat g^{\pi}_{X_t}(\mu_2)}{ f(\mu_2|M\backslash{\mu_2})}
\ldots
\geq
\frac{\widehat g^{\pi}_{X_t}(\mu_{\widehat m+1})}{ f(\mu_{\widehat m+1}|M\backslash{\mu_{\widehat m+1}})}.
\end{equation}
Then by induction we have
\begin{equation}
\label{ineqA1}
\frac{\sum_{k=1}^{\widehat m+1}\widehat g^{\pi}_{X_t}(\mu_{k})}{\sum_{k=1}^{\widehat m+1}
 f(\mu_{k}|M\backslash{\mu_{k}})}
\geq 
\frac{\widehat g^{\pi}_{X_t}(\mu_{\widehat m+1})}{ f(\mu_{\widehat m+1}|M\backslash{\mu_{\widehat m +1}})}.
\end{equation}
According to the definition of $\widehat m$ in (\ref{m_max}), it is evident that $$\sum_{k=1}^{\widehat m+1}
f(\mu_{k}|M\backslash{\mu_{k}})\geq b.$$
Substituting the above inequality to (\ref{ineqA1}), it holds that
\begin{equation}
\begin{split}
g^{\pi}_{X_t}(\mu_{\widehat m+1})
&\leq \frac{1}{b}
 f(\mu_{\widehat m+1}|M\backslash{\mu_{\widehat m +1}})
\sum_{k=1}^{\widehat m+1}g^{\pi}_{X_t}(\mu_{k})\\
&
\leq 
\frac{1}{b}
 f(\mu_{\widehat m+1}|\emptyset)
\sum_{k=1}^{\widehat m+1}g^{\pi}_{X_t}(\mu_{k})\\
&
\leq 
\frac{\Delta_f}{b}
\sum_{k=1}^{\widehat m+1}g^{\pi}_{X_t}(\mu_{k}).
\end{split}
\end{equation}
The second inequality is due to (\ref{monotone}), and the third inequality follows from $f(\emptyset)=0$  as well as (\ref{delta_f}).
By subtracting   $\frac{\Delta_f}{b}g^{\pi}_{X_t}(\mu_{\widehat m+1})$ on the left-hand side and then adding $\sum_{k=1}^{\widehat m}g^{\pi}_{X_t}(\mu_{k})$ on both sides, 
we obtain
\begin{equation}
\begin{split}
\sum_{k=1}^{\widehat m}
g^{\pi}_{X_t}(\mu_{k})
+
(1-\frac{\Delta_f}{b})
g^{\pi}_{X_t}(\mu_{\widehat m+1})
\leq 
\frac{\Delta_f}{b}
\sum_{k=1}^{\widehat m+1}g^{\pi}_{X_t}(\mu_{k})
+
\sum_{k=1}^{\widehat m}g^{\pi}_{X_t}(\mu_{k}),
\end{split}
\end{equation}
which equals
\begin{equation}
\begin{split}
\sum_{k=1}^{\widehat m+1}
g^{\pi}_{X_t}(\mu_{k})
-\frac{\Delta_f}{b}
g^{\pi}_{X_t}(\mu_{\widehat m+1})
\leq 
\frac{\Delta_f}{b}
\sum_{k=1}^{\widehat m+1}g^{\pi}_{X_t}(\mu_{k})
+
\sum_{k=1}^{\widehat m}g^{\pi}_{X_t}(\mu_{k}),
\end{split}
\end{equation}
The above inequality still holds after subtracting a positive    $\sum_{k=1}^{\widehat m}
g^{\pi}_{X_t}(\mu_{k})$ on the left-hand side:
\begin{equation}
\begin{split}
(1-\frac{2\Delta_f}{b})
\sum_{k=1}^{\widehat m+1}
g^{\pi}_{X_t}(\mu_{k})
\leq 
\sum_{k=1}^{\widehat m}g^{\pi}_{X_t}(\mu_{k}).
\end{split}
\end{equation}
From (\ref{obj_t}) $\sum_{k=1}^{\widehat m}g^{\pi}_{X_t}(\mu_{k}) = \widehat g(X_{t+1})$, and considering the fact that $\sum_{k=1}^{\widehat m+1}
g^{\pi}_{X_t}(\mu_{k})\geq \textrm{OPT}_{\widehat g}$, we finally prove Proposition \ref{KP}:  
$$\widehat g(X_{t+1})\geq (1-\frac{2\Delta_{f}}{b}) \widehat g({\textrm{OPT}_{\widehat g}}).$$

\end{document}